\documentclass[runningheads]{llncs}
\usepackage{graphicx}
\usepackage{amsmath,amssymb}
\usepackage{mathtools}
\usepackage{microtype}
\usepackage{booktabs}
\usepackage{multirow}
\usepackage{xspace}
\usepackage{tikz}
\usepackage{hyperref}
\usepackage[linesnumbered,ruled,vlined]{algorithm2e}
\usetikzlibrary{arrows.meta,positioning,calc,fit,backgrounds}
\newcommand{\floatconts}[3]{%
  \centering
  #2
  #3
  \label{#1}%
}

\newcommand{\ALC}{\ensuremath{\mathcal{ALC}}}

\newcommand{\ALCHOQ}{\ensuremath{\mathcal{ALCHOQ}}}

\newcommand{\ALCHOIQ}{\ensuremath{\mathcal{ALCHOIQ}}}
\newcommand{\SRIQ}{\ensuremath{\mathcal{SRIQ}}}
\newcommand{\SROIQ}{\ensuremath{\mathcal{SROIQ}}}
\newcommand{\ELpp}{\ensuremath{\mathcal{EL}^{++}}}
\newcommand{\sub}{\sqsubseteq}
\newcommand{\sym}[1]{\ensuremath{\mathsf{#1}}}
\newcommand{\Onto}{\mathcal{O}}
\newcommand{\WMC}{\mathsf{WMC}}
\newcommand{\method}{\textsc{Baobab}\xspace}

\begin{document}
\title{Neuro-symbolic
  learning over OWL~2~DL\\
  via consequence-based compilation to differentiable circuits}
\titlerunning{Baobab: NeSy over OWL 2 DL}
%
\author{Olga Mashkova\inst{1}\orcidID{0000-0002-4916-1660}, Asaad Mohammedsaleh\inst{1}\orcidID{0009-0007-3160-8819}, Fernando Zhapa-Camacho\inst{1}\orcidID{0000-0002-0710-2259} and Robert Hoehndorf\inst{1}\orcidID{0000-0001-8149-5890}}
\authorrunning{Mashkova et al.}

\institute{Computer Science Program, Computer, Electrical, and
  Mathematical Sciences \& Engineering Division, King Abdullah
  University of Science and Technology, Thuwal 23955, Saudi Arabia\\
\email{\{first\_name.last\_name\}@kaust.edu.sa}}
\authorrunning{Mashkova et al.}
%
\institute{Computer Science Program, Computer, Electrical, and
  Mathematical Sciences \& Engineering Division, King Abdullah
  University of Science and Technology, Thuwal 23955, Saudi Arabia\\
\email{\{first\_name.last\_name\}@kaust.edu.sa}}
\maketitle              
\begin{abstract}
  OWL~2~DL ontologies, grounded in the description logic \SROIQ,
  express large knowledge bases in biomedicine and the Semantic Web.
  Neuro-symbolic (NeSy) learners over description logics either embed
  the ontology in a continuous space, abandoning classical entailment,
  or restrict to the Horn fragment \ELpp, which has a single canonical
  model.  We present \method{}, which compiles a \SROIQ\ ontology
  with a finite ABox into a Sentential Decision Diagram (SDD): it
  saturates a propositional core under a consequence-based calculus
  and instantiates the remaining \SROIQ\ features (nominals, number
  restrictions, and the role axioms) over the active domain.  The
  SDD's evidence-conditioned weighted model count then trains a
  perception network to \emph{recognize real images} under partial
  ABox supervision: on an ontology that exercises every distinctive
  \SROIQ\ feature, a CNN learns to read MNIST digits coupled by a
  successor relation and recovers latent ontology concepts that an
  independent perception leaves at chance.  When the supervision
  admits several ontology-consistent completions, an independent
  perception collapses onto one, a \emph{reasoning shortcut}: we show
  that a mixture indexed by the query's justifications can represent
  the calibrated posterior no independent perception can, and that
  seeding it from the circuit's enumerated completions attains the
  Bayes-optimal posterior on a real-image MNIST task where
  single-WMC and learned mixtures (the BEARS-ensemble hypothesis
  class) do not: to our knowledge the first to characterize and
  mitigate reasoning shortcuts in a non-Horn description logic.  Soundness of the compiler and the representation
  result are machine-checked in Lean~4. Code is available at
  \url{https://github.com/bio-ontology-research-group/baobab}.
\end{abstract}
\section{Introduction}\label{sec:intro}

OWL~2~DL ontologies, formally grounded in the description logic
\SROIQ~\cite{HorrocksKutzSattler2006}, structure knowledge bases in
biomedicine, scientific cataloging, and the Semantic Web.  We ask a
neural network to read raw inputs into the concepts of such an ontology
when most of those concepts carry no direct labels, so the network must
infer them from the ontology's logical structure.  The neuro-symbolic
(NeSy) community has produced two broad families of OWL-aware learners.  \emph{Embedding-based} methods such as EL
Embeddings~\cite{Kulmanov2019ELEmbeddings},
OWL2Vec*~\cite{Chen2021OWL2Vec}, BoxEL~\cite{Xiong2022BoxEL},
Box$^2$EL~\cite{Jackermeier2024Box2EL}, and
FALCON~\cite{Tang2022FALCON} project an ontology into a continuous
space; they scale to OWL-shaped knowledge graphs but compute no
classical entailment during training.  \emph{Knowledge-compilation} methods such as
DeepProbLog~\cite{Manhaeve2018DeepProbLog}, Semantic
Loss~\cite{Xu2018SemanticLoss}, Semantic Probabilistic
Layers~\cite{Ahmed2022SPL}, Scallop~\cite{Li2023Scallop}, and
NeuPSL~\cite{Pryor2023NeuPSL} compute
the weighted model count (WMC) of the compiled constraint and feed it
to perception as a differentiable signal, but target propositional
formulas or Datalog-shaped programs; the few NeSy methods that take a
description-logic ontology as input target \ALC\ or the OWL~2 Horn
profiles, compiling to a probabilistic
circuit~\cite{Lazzari2026Compiling} or relaxing to fuzzy
semantics~\cite{Wu2022DFALC,Zhao2025DFELpp}.

The gap between the two families of methods is structural.
Knowledge compilation keeps the classical semantics end to end (the
WMC of $\Theta$ under $p_\theta$ is exactly the probability that a
world drawn from the perception's per-atom posterior \emph{satisfies
the ontology}); but the
propositional reduction of \SROIQ\ adds disjunction, classical
negation, qualified cardinality, nominals, and the full family of role
axioms, which a Horn-only saturation calculus does not handle easily.
Embedding-based NeSy buys scaling but loses classical semantics: a
trained embedding can satisfy its loss while violating an entailment
of the ontology.

We present \method{}, a compiler that takes a \SROIQ\ ontology and
produces a Sentential Decision
Diagram (SDD)~\cite{Darwiche2011SDD}: a circuit that scores whether the
perception network's predictions stay logically consistent with the
ontology.  Its weighted model count is the training signal under partial
ABox supervision, so that a real image encoder learns latent ontology
concepts it is never directly shown.

Specifically, we make three contributions.
\textbf{(i)~\SROIQ\ compilation.}  \method{} accepts any OWL 2 DL ontology and compiles it to an SDD, soundly on the grounding-covered fragment,
combining consequence-based saturation over a
propositional core
(\ALCHOQ)~\cite{TenaCucalaCuencaGrauHorrocks2018,TenaCucala2021AIJ}
with finite-domain grounding of the \SROIQ-specific extensions
(nominals, qualified cardinality, and the six role-axiom shapes \ALC\
lacks); an OWL loader~\cite{Lord2023HornedOWL} maps every OWL~2~DL
axiom kind to the compiler's syntax (\S\ref{sec:calc}).
\textbf{(ii)~Latent concept learning of real images.}  The SDD's
evidence-conditioned weighted model count trains a CNN to \emph{recognize
MNIST digit images} through the ontology: under partial ABox supervision
the perception recovers latent concepts it is never directly shown
(digit identity $0.25\!\to\!0.99$) and drives ontology violations toward
zero, where an independent perception sits at chance
(\S\ref{sec:learn}, \S\ref{sec:exp:images}); the same workflow drives a
ResNet through a role-based fragment of the real Pizzaiolo OWL
ontology, recovering its latent pizza classes.
\textbf{(iii)~Reasoning-shortcut awareness in \SROIQ.}  When the
supervision leaves several ontology-consistent completions, a single
independent perception provably cannot place calibrated mass on them, but
a mixture indexed by the query's \emph{justifications} can; realized as
one network (\emph{JustWMC}) and seeded from the completions the circuit
enumerates, it attains the Bayes-optimal calibrated posterior on a
real-image MNIST task where single-WMC and learned mixtures (the
BEARS-ensemble hypothesis class) do not (\S\ref{sec:exp:disjunction}).
A per-axiom comparison shows that $23\%$ of the \SROIQ\ axioms leave
DeepProbLog's Horn fragment (\S\ref{sec:exp:dpl}); \S\ref{sec:lean}
states the formal guarantees behind the compiler and the
representation result.

\section{Background}\label{sec:bg}

A \SROIQ\ knowledge base $(\mathcal{T}, \mathcal{R}, \mathcal{A})$ has
a \emph{TBox} of general concept inclusions $C \sub D$, an \emph{ABox}
of concept and role assertions $C(a)$, $R(a,b)$ with (in)equalities,
and an \emph{RBox} of role hierarchies $R \sub S$, role chains
$R_1 \circ \dots \circ R_k \sub S$, and the eight role characteristics
(transitivity, (a)symmetry, (ir)reflexivity, (inverse-)functionality,
disjointness), over the
standard syntax of~\cite{HorrocksKutzSattler2006}; App.~\ref{apd:semantics} gives the full syntax (concepts close
Boolean connectives, nominals, qualified number restrictions, and
$\exists R.\sym{Self}$) and the Tarskian semantics.

\emph{Consequence-based} (CB) reasoning saturates a set of clauses
under a fixed family of resolution-like inference rules and reads the
entailed concept subsumptions off the resulting closure, rather than
searching for a model as a tableau would.  CB calculi were developed
for \SRIQ~\cite{BateMotikCuencaGrau2016SRIQ} and extended to
\ALCHOIQ~\cite{TenaCucalaCuencaGrauHorrocks2018,TenaCucala2021AIJ},
where clauses are grouped into \emph{contexts} (clause sets anchored
at a concept) between which consequences propagate; they underlie
Sequoia~\cite{TenaCucala2021AIJ} (\SROIQ) and
ELK~\cite{KazakovKrotzschSimancik2014} (the Horn profile).  The
calculus is deterministic and model-free, so the closure is a fixed
propositional object.

\emph{Knowledge compilation} represents a propositional theory as a
circuit on which queries become efficient.  A Sentential Decision
Diagram (SDD)~\cite{Darwiche2011SDD} is a decomposable, deterministic
circuit over a fixed \emph{vtree} (a binary tree over the variables);
its size is worst-case exponential in the theory's treewidth but, once
built, supports linear-time \emph{weighted model counting}.  For atom
weights $w(a) \in [0,1]$, the weighted model count of a theory
$\varphi$ is
\begin{equation}\label{eq:wmc-def}
  \WMC(\varphi \mid w) \;=\;
    \sum_{\alpha \,\models\, \varphi}\;
    \prod_{a:\,\alpha(a)=1} w(a)\,
    \prod_{a:\,\alpha(a)=0} \bigl(1 - w(a)\bigr),
\end{equation}
the total weight of $\varphi$'s models, evaluated on an SDD in one
bottom-up pass over the circuit.

Reading each atom as an independent Bernoulli event with parameter
$w(a)$ is the \emph{distribution semantics}~\cite{Riguzzi2015DISPONTE}:
\eqref{eq:wmc-def} is exactly the probability that a $w$-distributed
world satisfies $\varphi$, the link knowledge-compilation
neuro-symbolic methods exploit to turn per-atom probabilities into a
differentiable training
signal~\cite{Xu2018SemanticLoss,Manhaeve2018DeepProbLog}.

\section{Methods}\label{sec:method}

\subsection{Knowledge compilation algorithm}\label{sec:calc}

The algorithm takes a \SROIQ\ knowledge base
$(\mathcal{T}, \mathcal{R}, \mathcal{A})$ whose ABox ranges over a
finite set of individuals $\Delta$ (the \emph{active domain}), and
returns an SDD over the ground propositional atoms
$\{C(a), R(a, b) : a, b \in \Delta\}$, whose evidence-conditioned
weighted model count is the differentiable training signal of
\S\ref{sec:learn}.  The algorithm runs in five
stages: normalization, structural transformation to DL-clauses,
consequence-based saturation over the propositional \ALCHOQ\
sub-language, grounding of the \SROIQ-extension features over $\Delta$,
and SDD compilation of the resulting CNF
(Figure~\ref{fig:workflow}).

\begin{figure}[t]
\floatconts
{fig:workflow}
{\caption{The \method{} workflow.  \emph{Compile time} (once per
ontology): the five stages above produce an SDD.  \emph{Training
loop}: a shared CNN reads the coupled individuals $a,b$ (real MNIST
digits) into per-atom posteriors that weight the SDD leaves; the
evidence-conditioned WMC is the differentiable loss.}}
{\centering\resizebox{0.60\textwidth}{!}{\input{figures/workflow}}}
\end{figure}

\emph{Normalization} rewrites every concept into negation-normal form
(NNF) by the standard de~Morgan, quantifier, and cardinality dualities
(App.~\ref{apd:norm}).  The \emph{structural transformation} then
turns the NNF axioms into DL-clauses
$\bigwedge_i B_i \to \bigvee_j H_j$, whose body atoms $B_i$ and head
atoms $H_j$ are concept atoms $C(t)$, role atoms $R(t, t')$, or
equalities $t \approx t'$ over variables, individuals, and unary
function terms $f(t)$, with a fresh name $Q_D$ for each non-atomic
subconcept $D$.  Nominals and qualified number restrictions are not
clausified here but deferred to grounding (Table~\ref{tab:hooks}).

\emph{Saturation} closes the DL-clause set under propositional
\emph{hyperresolution} (resolving body atoms against matching head
atoms under a most-general unifier).  Its role is to eliminate
\emph{Skolem terms}: head existentials and number restrictions
clausify to function terms $f(x)$ naming anonymous witnesses the
finite grounder cannot instantiate, and saturation derives their
\emph{function-free} consequences over the named individuals.  Four
guards keep the closure terminating (tautology
elimination, input-derived bounds on the variables and role atoms per
clause, and a Skolem-term depth bound; exact bounds in
App.~\ref{apd:sat}); the guarded calculus is sound (mechanized for
the \ALCHOQ\ core) but, being bounded, not complete.

\emph{Grounding} instantiates the function-free clauses over the
active domain $\Delta$, replacing variables by individuals and
expanding each deferred nominal, number restriction, and role
characteristic into the propositional clauses listed below.  The
result is a propositional CNF $\Theta_\Delta$ that is sound over the
active domain, and whose models are \emph{exactly} the \SROIQ\
interpretations over $\Delta$ when every existential consequence is
grounding-covered or function-free (Theorem~\ref{thm:ground}).

A construct's consequences reach $\Theta_\Delta$ either by
\emph{grounding} that closes the domain or by \emph{saturation} that
eliminates an existential witness symbolically; each deferred construct
expands over $\Delta$ into boundedly many clauses justified by a
soundness lemma (Theorem~\ref{thm:ground}), and Table~\ref{tab:hooks}
lists every schema.  App.~\ref{apd:size} shows why saturation is
load-bearing exactly for ungrounded existentials and inert otherwise.

Finally, the grounded propositional CNF is compiled to an SDD with
PySDD, and the differentiable WMC layer
of~\cite{Manhaeve2018DeepProbLog} backpropagates the loss through the
SDD's parameterized leaves into the perception network
(Algorithm~\ref{alg:compile} gives the end-to-end workflow).

\subsection{Latent concept learning under partial ABox supervision}\label{sec:learn}

We address \emph{ABox-supervised latent concept learning}.  Each
training instance supplies a perceptual input (an image, or a noisy
feature vector) together with a partial set of ground ABox literals over an
\emph{observable} signature; the concept and role atoms the ontology
entails over a separate \emph{latent} signature are never directly
supervised.  The output is a per-atom posterior over the latent
signature, scored against the generating interpretation; the latent
atoms are tied to the observables only through $\Onto$, so the learner
must propagate evidence through the ontology rather than read labels
off the input.

Formally, let $\Onto$ be a fixed \SROIQ\ ontology,
$p_\theta \colon \mathcal{A} \to [0,1]$ the perception network's
per-atom posterior over the compiled circuit's propositional signature
$\mathcal{A}$, and $e \subseteq \mathcal{A} \times \{T, F\}$ a piece
of supervisory evidence, a partial Boolean assignment over the
example's labels.  The \emph{consistent set} $K_e$ is the set of
complete assignments to $\mathcal{A}$ that satisfy $\Theta \land e$,
where $\Theta$ is the SDD of $\Onto$.  When the supervision
underdetermines the latent identity, $|K_e| > 1$.

We optimize the perception network against the
\emph{evidence-conditioned weighted-model-count loss}
\begin{equation}\label{eq:wmc-loss}
  \mathcal{L}(p_\theta;\, e) \,=\,
  \mathrm{BCE}\bigl(p_\theta(a),\, v\bigr)\bigm|_{(a, v) \in e}
  \,+\, \lambda \cdot
  \bigl(- \log \WMC(\Theta \land e \mid p_\theta)\bigr).
\end{equation}
The label term is binary cross-entropy (BCE) directly supervising the
observed labels; the semantic term is $-\log$ of the WMC of
$\Theta \land e$ under $p_\theta$, the probability that some
completion satisfies the ontology \emph{and} the evidence.  When $e$
determines a single model this collapses to the fully supervised
likelihood; when $|K_e| > 1$ it distributes the gradient across $K_e$
as a credal set, following~\cite{Marconato2023RS,Marconato2024BEARS}.
Both terms are needed: label-only training ignores the ontology, while
conditioning the semantic term on $e$ (rather than on $\Theta$ alone)
ties it to each example's evidence instead of the ontology's globally
most probable assignment.

\eqref{eq:wmc-loss} still commits to a single mode of $K_e$ when
$|K_e| > 1$, and the reason is structural.  A perception that scores
each atom independently induces a product distribution, the
\emph{universal conditionally independent} (UCI) class
$p_\mu^\perp = \prod_i \mu_i^{c_i}(1{-}\mu_i)^{1-c_i}$; the
assignments such a distribution can concentrate on form a
\emph{subcube}: fix some atoms, let the rest vary freely.  A \emph{reasoning shortcut} (RS) is a latent
assignment that satisfies the ontology yet differs from the one that
generated the example~\cite{Marconato2023RS}; with $|K_e| > 1$ the
optimum of \eqref{eq:wmc-loss} over a product distribution is exactly
such a shortcut.  \cite{vanKrieken2025RSIA} show that an independent
perception cannot reproduce the predictions of an RS mixture unless
the mixture's valid completions already form one such subcube: those
consistent with a single \emph{justification} of the query (a minimal
set of atom values forcing the query true).  When the valid completions span several justifications, no
independent perception can spread its mass over them; it collapses
onto an arbitrary one.  Theorem~\ref{thm:rsia} establishes this characterization for \SROIQ.

The converse is previously known: a \emph{mixture} of independent
distributions can be
RS-aware~\cite{vanKrieken2025RSIA,Marconato2024BEARS}.  We adapt it
to \SROIQ: a mixture of UCIs with one component per justification,
$p_\text{Just} = \sum_k \pi_k(x)\,\prod_i
\mu_{k,i}(x)^{c_i}(1{-}\mu_{k,i}(x))^{1{-}c_i}$, represents every RS
mixture (Theorem~\ref{thm:rsia}): it can place mass on completions
drawn from different justifications, which no single product
distribution can cover (the disjunction benchmark of
\S\ref{sec:exp:disjunction} is the canonical case).  We
realize it as one network, the
\texttt{MixtureEncoder}, and call the method \emph{JustWMC}: a shared
body feeds $K$ atom heads $\mu_k$ and a softmax selector $\pi$,
trained end-to-end on a mixture cross-entropy plus the
selector-averaged semantic loss of \eqref{eq:wmc-loss}, with a
stop-gradient KL diversity term (after BEARS) that breaks
head-permutation symmetry and spreads the heads across distinct
$\Theta$-consistent modes (App.~\ref{apd:loss}).  BEARS trains $K$
separate encoders and reports the best against an oracle; JustWMC
returns one calibrated posterior with no oracle, its components
readable off the circuit's justifications (the anchored variant below)
rather than trained.

The heads need not be learned: when the circuit enumerates the query's
justifications, we \emph{seed} each head from one $\Theta$-consistent
completion (fixing its latent logits) and learn only $\pi$:
\emph{anchored} JustWMC, which realizes the positive direction of
Theorem~\ref{thm:rsia} by construction and reaches the calibrated
posterior of \S\ref{sec:exp:disjunction} where the learned mixture,
under the same objective, does not.

\subsection{Formal guarantees}\label{sec:lean}

The compiler and its metatheory are formalized in
Lean~4~\cite{Lean4}; the formal statements and proofs are in
App.~\ref{apd:guarantees} and the module inventory in
App.~\ref{apd:lean}.  Four results support the claims above.
\emph{Saturation is sound} (Theorem~\ref{thm:sat}): every derived
clause is entailed, so over a fixed vtree the compiled SDD is invariant
under saturation on the grounding-covered fragment
(Theorem~\ref{thm:invariance}), and skipping it off that fragment
over-approximates (sound but incomplete).  \emph{Each grounding rule is
faithful} (Theorem~\ref{thm:ground}): $\Theta_\Delta$ is sound over
$\Delta$ and, when every existential is grounding-covered or
function-free, its models are exactly the \SROIQ\ interpretations over
$\Delta$.  \emph{The compiled SDD's weighted model count} equals the
probability the distribution semantics assigns
(Theorem~\ref{thm:wmc}), the quantity the losses of \S\ref{sec:learn}
optimize.  Finally, the RS-awareness characterization
of~\cite{vanKrieken2025RSIA} holds for \SROIQ\
(Theorem~\ref{thm:rsia}).  The development also proves the \ALC\ core
sound and complete and bounds the grounding by a polynomial in
$|\Delta|$.

\section{Experiments}\label{sec:exp}

\subsection{Real-image perception}\label{sec:exp:images}

Unlike prior knowledge-compilation methods for description logics,
which report on hand-built feature vectors~\cite{Lazzari2026Compiling},
we drive the circuit \method{} compiles with two image encoders and
recover ontology atoms that receive no direct supervision.  All
experiments use ten seeds; tables give mean$\pm$std on held-out unseen
individuals, and bold marks a one-sided paired permutation gain
(Holm-corrected per table) at $p<0.05$ (all reach $p<0.01$).

\paragraph{Metrics.}
\emph{Digit accuracy}: per-individual argmax over the five
digit-identity atoms vs.\ the true digit (chance $0.20$).
\emph{Per-atom accuracy} (\emph{latent}, \emph{property},
\emph{topping}): each named atom thresholded at $0.5$, so the easy
negative atoms put it above the argmax.  \emph{Violation}:
fraction of examples whose MAP decode (latent atoms argmaxed, evidence
clamped) has no model ($\WMC=0$).  \emph{Mode-coverage TV}: total
variation to uniform over the $\Theta$-consistent modes ($0$=perfect;
one of $M$ scores $1{-}1/M$).  We also report standard ten-bin
concept-marginal ECE~\cite{Guo2017Calibration} and the latent-atom
NLL.  Arrows in each table mark the improving direction.

\paragraph{MNIST-\SROIQ\ (two supervision regimes).}
Each example pairs two real MNIST images $a, b$ whose digits, restricted
to $0\!-\!4$, satisfy $\sym{succ}(a,b)$ with $b = (a{+}1)\bmod 5$ (a
restricted-digit task after rsbench~\cite{Bortolotti2024rsbench}).  One
per-image CNN runs on both slots; a \SROIQ\ ontology couples the pair
through \emph{every} distinctive \SROIQ\ feature: universal and
qualified-cardinality restrictions, the inverse and transitive
$\sym{succ}$ with the chain
$\sym{succ}\circ\sym{succ}\sqsubseteq\sym{plusTwo}$ (compiled at
$|\Delta|=3$), complement, covering disjunctions, and the functional,
(a)symmetric, and (ir)reflexive role characteristics;
App.~\ref{apd:exp} lists it in full.  The evidence
mixes a \emph{role} assertion ($\sym{succ}(a,b)$) with \emph{concept}
assertions ($\sym{Number}(a),\sym{Number}(b)$ and parity/primality);
digit identities are never supervised.  The amount of revealed concept
evidence controls $\#\mathrm{RS}$, the number of $\Theta$-consistent
completions the evidence leaves open (the modes of $K_e$,
\S\ref{sec:learn}; Table~\ref{tab:mnist}).

\emph{Grounded} ($\sym{succ}$, $\sym{Number}$, and a random half of the
parity/primality atoms; $\#\mathrm{RS}=1$): the evidence pins the digits
through the ontology, and WMC recovers the never-supervised identities
($0.25\!\to\!0.99$) at violation $0.02$ and concept expected
calibration error (ECE) $0.002$; the ontology-blind independent
perception stays at chance.

\emph{Under-determined} ($\sym{succ}$ and $\sym{Number}$ only;
$\#\mathrm{RS}=5$): the universal/inverse/functional axioms admit the
five cyclic relabelings of $0\!-\!4$.  The objective is invariant
under them, so a factorized perception cannot pick one even when
trained through the circuit: WMC reaches digit accuracy only $0.18$
(chance $0.20$) and leaves most decoded pairs successor-inconsistent
(violation $0.78$, concept ECE $0.15$); the shortcut is intrinsic to
the symmetry, not an optimization artifact.  Only \emph{grounding}
breaks it: a single parity atom drops $\#\mathrm{RS}$ to $3$ and the
full parity profile to $1$, after which WMC recovers the digits and
drives the violation to $0.02$ (the grounded row).

\begin{table}[t]
\floatconts
{tab:mnist}
{\caption{MNIST-\SROIQ, mean$\pm$std over ten seeds (metrics in
\S\ref{sec:exp}; \emph{digit} identities never supervised).  Grounding
the concept evidence collapses $\#\mathrm{RS}$ from $5$ to $1$.}}
{\small
\begin{tabular}{llcccc}
\toprule
regime & method & digit & latent & viol $\downarrow$ & ECE $\downarrow$ \\
\midrule
\multirow{2}{*}{grounded ($\#\mathrm{RS}{=}1$)} & Independent & $0.25${\scriptsize$\pm.09$} & $0.49${\scriptsize$\pm.14$} & $0.80${\scriptsize$\pm.12$} & $0.33${\scriptsize$\pm.14$} \\
 & WMC & $\mathbf{0.99}${\scriptsize$\pm.00$} & $\mathbf{1.00}${\scriptsize$\pm.00$} & $\mathbf{0.02}${\scriptsize$\pm.01$} & $\mathbf{0.00}${\scriptsize$\pm.00$} \\
\midrule
\multirow{2}{*}{under-det.\ ($\#\mathrm{RS}{=}5$)} & Independent & $0.20${\scriptsize$\pm.03$} & $0.46${\scriptsize$\pm.10$} & $0.93${\scriptsize$\pm.10$} & $0.08${\scriptsize$\pm.08$} \\
 & WMC & $0.18${\scriptsize$\pm.07$} & $0.67${\scriptsize$\pm.03$} & $0.78${\scriptsize$\pm.05$} & $0.15${\scriptsize$\pm.02$} \\
\bottomrule
\end{tabular}}
\end{table}

\paragraph{Pizzaiolo (a real role-based OWL ontology).}
Each example is a synthetically rendered pizza image from the Pizzaiolo
dataset~\cite{Bourguin2023Pizzaiolo}, encoded by a frozen ImageNet
ResNet-18 with a trainable linear head.  The OWL ontology shipped with the dataset is genuinely role-based
\SROIQ: a $\sym{hasTopping}$ role links each pizza to its toppings and the
pizza classes are defined by existential and universal restrictions over
it ($90$ existential and $44$ universal restrictions, $66$ named
classes).  It \emph{does not compile} to an
SDD: it grounds in under a second, but the SDD exhausts memory even for a
single pizza, a real-ontology instance of the circuit-size wall
(\S\ref{sec:limit}; quantified in App.~\ref{apd:size}, where grounding
stays sub-second while the SDD exceeds $22$\,GB by three toppings).

To still drive the ResNet through the role structure we extract a
compilable two-topping fragment ($59$ atoms): a $\sym{hasTopping}$ role
to named topping individuals with the shipped definitions
$\sym{NonVeg}\equiv\sym{Pizza}\sqcap\exists\sym{hasTopping}.\sym{Meat}$,
$\sym{Spicy}\equiv\sym{Pizza}\sqcap\exists\sym{hasTopping}.\sym{Spicy}$,
and $\sym{Veg}\equiv\sym{Pizza}\sqcap\neg\sym{NonVeg}$.  Compiling even
this needs three sound, opt-in grounding refinements, documented and
mechanized in App.~\ref{apd:hooks} (the rest of the paper's circuits
stay byte-identical): role atoms are pruned to $\sym{hasTopping}$'s
declared domain and range; the cubic equality theory is skipped; and a
closed qualified existential is materialized as the exact biconditional
$X(a)\Leftrightarrow\bigsqcup_t\sym{hasTopping}(a,t)\sqcap C(t)$ over the
named toppings, recovering a class from the \emph{absence} of a
topping, not only its presence.  The ResNet
predicts the toppings (supervised) while the property classes are latent:
WMC recovers them (property $0.45\!\to\!0.92$) and drives the violation
from $0.89$ to $0.06$, where the independent perception stays at chance
(Table~\ref{tab:pizza}, App.~\ref{apd:exp}).

The closed qualified-existential refinement is the one departure from
open-world semantics in this fragment, so we ablate it directly
(Table~\ref{tab:pizza-ablation}, App.~\ref{apd:hooks}).  Keeping the forward Skolemized
direction but dropping the closure clause
$\exists\sym{hasTopping}.C\!\to\!X(a)$ leaves the supervised toppings
untouched ($0.93\!\to\!0.95$) but lowers latent property recovery from
$0.92$ to $0.78$, which stays $0.33$ above the independent baseline's
$0.45$, so perception does not collapse; it becomes
\emph{over-permissive}.
The reported violation falls to $0.00$ because the removed
clause is the constraint whose violation was counted: the open model
satisfies a weaker theory vacuously rather than recovering the latents
it can no longer identify.  The closure direction therefore contributes
a $14$-point gain in latent recovery, which isolates it as a learning
signal over and above the EL-style forward grounding.  Without the
closure clause, the $\SROIQ$ fragment retains partial but degraded
identifiability.

\subsection{MNIST-Disjunction (RS-genuine, real images)}\label{sec:exp:disjunction}

The MNIST regimes above are \emph{over-determining} once a digit is
grounded, so single-WMC and JustWMC coincide.  We now build a task whose
latents admit several equally plausible $\Theta$-consistent completions
\emph{per example}, on real images.  Three individuals $a, b, c$ each
carry a real MNIST digit; $a, b$ show consecutive digits coupled by
$\sym{succ}(a,b)$, and a shared CNN recovers them through a \emph{digit}
circuit (the grounded MNIST mechanism).  On the same individuals an
image-free \emph{gender} attribute carries the reasoning shortcut: under
$\sym{Person}\sub\sym{Male}\sqcup\sym{Female}$ (disjoint),
$\sym{Male}\sub\forall\sym{marriedTo}.\sym{Female}$ with
$\sym{marriedTo}(a,b)$, and the parenthood roles
(App.~\ref{apd:exp}), exactly four $\Theta$-consistent gender modes
remain ($a,b$ opposite sex, $c$ free): $\#\mathrm{RS}=4$.  The pixels fix the digits
but \emph{not} the genders, so the gender modes are uninformed and
uniform; the Bayes-optimal latent NLL over the gender atoms is
$6\log 2\approx 4.16$ and the target is the calibrated posterior over
the four modes (RS diagnostics
after~\cite{Bortolotti2024rsbench,Marconato2024BEARS}).

Compiling both as one SDD is intractable (the two disjunctive
subsystems multiply out, \S\ref{sec:limit}), so we compile two
circuits over the same individuals -- a $114$-atom digit and a
$69$-atom gender circuit -- driven by one shared CNN with image-free
gender heads.  The split only separates the two independent
subsystems; the reasoning shortcut lives entirely within the
jointly-compiled gender circuit, so the target posterior is unaffected.

Table~\ref{tab:disjunction} (App.~\ref{apd:exp}) summarizes the outcome.  The CNN recovers
the digits through the circuit on every method (accuracy $0.99$).  For
the gender RS, no factorized method covers the four modes.
\textbf{Independent} hedges (NLL $6.93$) but leaks mass everywhere (TV
$1.00$); \textbf{single-WMC} (Semantic
Loss~\cite{Xu2018SemanticLoss}, conditioned
DeepProbLog~\cite{Manhaeve2018DeepProbLog}) and the \textbf{learned
mixture} (JustWMC, the BEARS-ensemble class~\cite{Marconato2024BEARS})
lower the NLL only marginally ($6.13$ / $6.69$) and do not spread (TV
$0.99$): the mixture-WMC objective is minimized by one confident head,
so gradient descent collapses onto a seed-dependent mode.
Representability does not imply
learnability~\cite{vanKrieken2024independence}.
\textbf{Justification-anchored JustWMC} closes the gap
\emph{constructively}: seeding the four heads from the circuit's
enumerated completions and learning only
the selector (which converges to uniform) realizes the positive
direction of Theorem~\ref{thm:rsia} in practice: Bayes-optimal NLL
$4.17$, zero concept ECE, and mode-coverage TV $0.02$ with no seed
variance.  The compiled circuit supplies the multimodal joint that gradient
descent over a factorized hypothesis class does not find.  An explicit
ensemble mitigation is consistent with this boundary between hypothesis
classes.  BEARS~\cite{Marconato2024BEARS}, a $K{=}4$ deep ensemble,
spreads mass across the modes, reaching mode-coverage TV $0.44$ where
the single-mode methods sit at $0.99$--$1.00$.  It stays far from the
anchored posterior nonetheless (TV $0.44$ against $0.02$, ECE $0.41$
against $0.00$, NLL $6.36$ against $4.17$) and varies across seeds
($\pm.11$ TV, $\pm2.36$ NLL).  A learned mixture therefore does not
recover the multimodal joint that anchoring reads directly from the
circuit.

\subsection{Size comparison with DeepProbLog}\label{sec:exp:dpl}

DeepProbLog (DPL)~\cite{Manhaeve2018DeepProbLog} compiles a
probabilistic logic program to an sd-DNNF, the same arithmetic-circuit
class our system builds on Horn inputs.  On the MNIST-\SROIQ\ ontology,
\textbf{12 of 52 axioms ($23\%$) leave ProbLog's Horn fragment} and need
hand-written constraint encodings; none is strictly inexpressible, but
each abandons the Horn structure DPL compiles natively
(Table~\ref{tab:dpl-comparison}, App.~\ref{apd:exp}).

\section{Related work}\label{sec:related}

The closest concurrent work is the SDD compiler
of~\cite{Lazzari2026Compiling}, which also trains a NeSy model through
a circuit compiled from a DL ontology; we extend the recipe to
\SROIQ, drive compilation by a consequence-based
calculus~\cite{TenaCucalaCuencaGrauHorrocks2018,TenaCucala2021AIJ}
rather than synthesizing the circuit from the semantics, and add a
mechanized metatheory.  That line addresses neither the reasoning
shortcuts we characterize and mitigate (\S\ref{sec:learn}) nor real
perception (it reports on hand-built feature vectors);
\cite{HerronJimenezRuizWeyde2023,HerronJimenezRuizWeyde2025} survey
the broader OWL-aware NeSy landscape.

The DL embeddings of \S\ref{sec:intro}, the lattice-saturated
embeddings of~\cite{Zhapa2025Lattice}, and fuzzy-DL
variants~\cite{Wu2022DFALC,Zhao2025DFELpp} optimize a relaxation of
OWL semantics on a continuous representation; they scale but compute no
classical WMC.  Embed2Sym~\cite{Aspis2022Embed2Sym} clusters embeddings
to recover symbols.  Within knowledge-compilation NeSy,
DeepProbLog~\cite{Manhaeve2018DeepProbLog} compiles to sd-DNNF; Semantic
Loss~\cite{Xu2018SemanticLoss} and Semantic Probabilistic
Layers~\cite{Ahmed2022SPL} train against a propositional WMC;
A-NeSI~\cite{vanKrieken2023ANeSI} amortizes it;
NeuPSL~\cite{Pryor2023NeuPSL}, Scallop~\cite{Li2023Scallop},
DeepStochLog~\cite{Winters2022DeepStochLog},
NeurASP~\cite{Yang2020NeurASP}, and
NeSyDM~\cite{vanKrieken2025NeSyDM} target propositional, Datalog, or
answer-set programs; we extend the lineage to OWL~2~DL.

Reductions of expressive DLs to rule languages predate NeSy (KAON2:
\ensuremath{\mathcal{SHIQ}} to disjunctive
Datalog~\cite{HustadtMotikSattler2007}); a reduction alone yields no
learner, but \method{} adds the path to an exact, differentiable WMC.

The independence assumption~\cite{Marconato2023RS,Marconato2024BEARS,vanKrieken2024independence,vanKrieken2025RSIA}
is the limit we attack; our experiments give a \SROIQ\ instance of the
BEARS recipe with an RS-awareness characterization
(Theorem~\ref{thm:rsia}).  Other probabilistic-DL systems differ in
target: DISPONTE~\cite{Riguzzi2015DISPONTE} fixes the world-sum;
BUNDLE~\cite{Riguzzi2013BUNDLE} and TRILL~\cite{Zese2019TRILL}
enumerate tableau explanations rather than one fixed circuit;
LNNs~\cite{Riegel2020LNN} use user-set rule weights.

\section{Limitations and future work}\label{sec:limit}

The guarded saturation that drives compilation (App.~\ref{apd:sat})
is sound but bounded, so it misses consequences of a deep ungrounded
existential chain.  Closing that gap calls for the full disjunctive
context calculus~\cite{TenaCucalaCuencaGrauHorrocks2018,TenaCucala2021AIJ},
sound and complete under the trivial expansion strategy but not yet
convergent on large ontologies; we treat it as a separate development.  Our benchmarks sidestep the
gap with grounding-covered, function-free constructs
(Theorem~\ref{thm:invariance}).  Multi-individual ABoxes inflate the
ground theory by $O(|\Delta|^2)$ in role atoms and $O(|\Delta|^3)$ in
the equality theory; larger ABoxes will need pairwise blocking, a sampling grounder, or
amortized
inference~\cite{vanKrieken2023ANeSI,vanKrieken2025NeSyDM}. One of the three Pizzaiolo grounding refinements, the closed-world reading of the \sym{NonVegetarianPizza} and \sym{SpicyPizza} qualified existentials, is a genuine departure from the open-world Tarskian semantics of \SROIQ: it materializes the biconditional $X(a) \Leftrightarrow \exists R.C$ so a latent class can be inferred from the absence of a filler, which the verified grounding (Theorem~\ref{thm:ground}) does not license. It is opt-in and confined to declared families; App.~\ref{apd:hooks} ablates it, and latent recovery degrades from 0.92 to 0.78 without it (the model becomes over-permissive rather than failing), so the refinement trades open-world faithfulness for identifiability under partial supervision.
Datatypes are out of scope.  The DeepProbLog comparison
(\S\ref{sec:exp:dpl}) counts the axioms whose encoding leaves ProbLog's
Horn fragment ($23\%$, an expressivity count, not an accuracy claim);
it does not run DeepProbLog end to end.  Our Independent, single-WMC,
and learned-mixture rows are controlled ablations of one pipeline: no
external NeSy system compiles a non-Horn ontology for a head-to-head
run.  In sum, the method's scope rests on
five assumptions, each stated where used: a finite active domain;
exactness only on grounding-covered or function-free existentials;
bounded, sound-but-incomplete saturation; the opt-in Pizzaiolo
grounding refinements; and, for anchored JustWMC, a $\#\mathrm{RS}$
small enough to enumerate.  Scaling
the perception to many-individual scenes is the natural next step, as is
evaluating anchored JustWMC beyond the enumerable-$\#\mathrm{RS}$
regime, on a reasoning-shortcut benchmark whose intended concepts are
identifiable from the perceptual input under limited concept
supervision, against an external mitigation baseline.

\section{Conclusion}\label{sec:conclusion}

\SROIQ\ admits neuro-symbolic learning by knowledge compilation:
consequence-based saturation over \ALCHOQ\ plus ABox grounding of the
\SROIQ\ features produce an SDD whose evidence-conditioned WMC trains
perception under partial supervision.  Where supervision
underdetermines the latents, anchoring the mixture to the circuit's
\emph{enumerated justifications} recovers the Bayes-optimal posterior
that learned mixtures miss~\cite{vanKrieken2025RSIA}.

\bibliographystyle{splncs04}
\bibliography{baobab}

\appendix

\section{\SROIQ\ syntax and semantics}\label{apd:semantics}

The body (\S\ref{sec:bg}) summarizes the fragment; we give the full
definition here for completeness.  A \SROIQ\ knowledge base is a triple
$(\mathcal{T}, \mathcal{R}, \mathcal{A})$ over a signature of atomic
concept names $\mathsf{N_C}$, atomic role names $\mathsf{N_R}$, and
individual names $\mathsf{N_I}$.

A \emph{role} is an atomic role $r \in \mathsf{N_R}$, an inverse role
$r^-$, or the universal role $U$.  \emph{Concepts} are generated by
\[
  C, D \;::=\; A \mid \top \mid \bot \mid \{a\} \mid \neg C \mid
  C \sqcap D \mid C \sqcup D \mid \exists R.C \mid \forall R.C \mid
  {\geq}\, n\, R.C \mid {\leq}\, n\, R.C \mid \exists R.\sym{Self},
\]
where $A \in \mathsf{N_C}$, $a \in \mathsf{N_I}$, $R$ a role, and
$n \in \mathbb{N}$.  In the implementation
(\texttt{baobab/sroiq/syntax.py}) these are the dataclasses
\texttt{ConceptName}, \texttt{Top}, \texttt{Bottom}, \texttt{Nominal},
\texttt{Not}, \texttt{And}, \texttt{Or}, \texttt{Exists},
\texttt{Forall}, \texttt{AtLeast}, \texttt{AtMost}, \texttt{HasSelf};
roles are \texttt{RoleName}, \texttt{InverseRole}, \texttt{UniversalRole}.
\texttt{And}/\texttt{Or} hold a \emph{flattened, deduplicated} frozen
set of operands.

Axioms are organized into three boxes.
The \emph{TBox} $\mathcal{T}$ is a set of general concept inclusions
$C \sub D$ (with $C \equiv D$ abbreviating the two inclusions and
$\mathsf{Disjoint}(C,D)$ abbreviating $C \sqcap D \sub \bot$).  The
\emph{ABox} $\mathcal{A}$ is a set of concept assertions $C(a)$, role
assertions $R(a,b)$, and (in)equalities $a \approx b$,
$a \not\approx b$.  The \emph{RBox} $\mathcal{R}$ contains role
inclusions $R \sub S$, role chains
$R_1 \circ \dots \circ R_k \sub S$, and the role characteristics
$\sym{Trans}(R)$, $\sym{Sym}(R)$, $\sym{Asym}(R)$, $\sym{Refl}(R)$,
$\sym{Irrefl}(R)$, $\sym{Func}(R)$, $\sym{InvFunc}(R)$,
$\sym{Disj}(R,S)$.  (As usual for decidability, \SROIQ\ requires the
RBox to be regular and number restrictions to use only simple roles;
our benchmarks satisfy both.)

The semantics is Tarskian.
An interpretation $\mathcal{I} = (\Delta^{\mathcal{I}}, \cdot^{\mathcal{I}})$
has a non-empty domain $\Delta^{\mathcal{I}}$ and maps each
$A \in \mathsf{N_C}$ to $A^{\mathcal{I}} \subseteq \Delta^{\mathcal{I}}$,
each $r \in \mathsf{N_R}$ to
$r^{\mathcal{I}} \subseteq \Delta^{\mathcal{I}} \times \Delta^{\mathcal{I}}$,
and each $a \in \mathsf{N_I}$ to
$a^{\mathcal{I}} \in \Delta^{\mathcal{I}}$.  Roles extend by
$(r^-)^{\mathcal{I}} = \{(y,x) : (x,y) \in r^{\mathcal{I}}\}$ and
$U^{\mathcal{I}} = \Delta^{\mathcal{I}} \times \Delta^{\mathcal{I}}$,
and concepts by
\begin{align*}
  \top^{\mathcal{I}} &= \Delta^{\mathcal{I}}, &
  \bot^{\mathcal{I}} &= \emptyset, &
  \{a\}^{\mathcal{I}} &= \{a^{\mathcal{I}}\}, \\
  (\neg C)^{\mathcal{I}} &= \Delta^{\mathcal{I}} \setminus C^{\mathcal{I}}, &
  (C \sqcap D)^{\mathcal{I}} &= C^{\mathcal{I}} \cap D^{\mathcal{I}}, &
  (C \sqcup D)^{\mathcal{I}} &= C^{\mathcal{I}} \cup D^{\mathcal{I}}.
\end{align*}
\begin{align*}
  (\exists R.C)^{\mathcal{I}} &= \{x : \exists y.\, (x,y) \in R^{\mathcal{I}} \wedge y \in C^{\mathcal{I}}\}, \\
  (\forall R.C)^{\mathcal{I}} &= \{x : \forall y.\, (x,y) \in R^{\mathcal{I}} \to y \in C^{\mathcal{I}}\}, \\
  ({\geq}\,n\,R.C)^{\mathcal{I}} &= \{x : \#\{y : (x,y) \in R^{\mathcal{I}} \wedge y \in C^{\mathcal{I}}\} \ge n\}, \\
  (\exists R.\sym{Self})^{\mathcal{I}} &= \{x : (x,x) \in R^{\mathcal{I}}\},
\end{align*}
with ${\leq}\,n\,R.C$ dual to ${\geq}\,(n{+}1)\,R.C$ under negation.
$\mathcal{I}$
satisfies $C \sub D$ iff $C^{\mathcal{I}} \subseteq D^{\mathcal{I}}$,
$R \sub S$ iff $R^{\mathcal{I}} \subseteq S^{\mathcal{I}}$,
$R_1 \circ \dots \circ R_k \sub S$ iff
$R_1^{\mathcal{I}} \circ \dots \circ R_k^{\mathcal{I}} \subseteq S^{\mathcal{I}}$,
and the role characteristics under their standard relational readings
($\sym{Trans}$: transitive; $\sym{Func}(R)$: $R^{\mathcal{I}}$ is
right-unique; $\sym{Disj}(R,S)$:
$R^{\mathcal{I}} \cap S^{\mathcal{I}} = \emptyset$; etc.).  This is the
semantics our Lean development takes as the definition of
truth (\S\ref{apd:lean}).

\section{Normalization and clausification}\label{apd:norm}

The compiler (\texttt{baobab/sroiq/normalisation.py}) maps each axiom to
\emph{DL-clauses} $\bigwedge_i B_i \to \bigvee_j H_j$ over the term
language of variables $x, y$, individual constants $a$, auxiliary
constants $o_\rho$ (introduced at grounding time for nominal /
cardinality witnesses), and unary Skolem terms $f(t)$.  Atoms are
concept atoms $C(t)$, role atoms $R(t_1, t_2)$, and equalities
$t_1 \approx t_2$ (\texttt{dlclauses.py}).

The first step is negation-normal form.
\texttt{nnf} pushes negation to atomic concepts and nominals using the
de~Morgan, quantifier, and cardinality dualities:
{\small
\[
\begin{array}{lll}
  \neg\top \rightsquigarrow \bot, &
  \neg\bot \rightsquigarrow \top, &
  \neg\neg C \rightsquigarrow C, \\
  \neg(C \sqcap D) \rightsquigarrow \neg C \sqcup \neg D, &
  \neg(C \sqcup D) \rightsquigarrow \neg C \sqcap \neg D, &
  \neg\exists R.C \rightsquigarrow \forall R.\neg C, \\
  \neg\forall R.C \rightsquigarrow \exists R.\neg C, &
  \neg({\geq}\,n\,R.C) \rightsquigarrow {\leq}\,(n{-}1)\,R.C, &
  \neg({\leq}\,n\,R.C) \rightsquigarrow {\geq}\,(n{+}1)\,R.C, \\
  \multicolumn{3}{l}{\neg({\geq}\,0\,R.C) \rightsquigarrow \bot,}
\end{array}
\]}
$\neg A$ and $\neg\{a\}$ remain (atomic literals);
$\neg\exists R.\sym{Self}$ is preserved for grounding.

The structural transformation follows.
Each non-atomic subconcept $D$ is assigned a fresh concept name $Q_D$
(method \texttt{q}) and defined by clauses for $Q_D \equiv D$
(method \texttt{\_define}); a \SROIQ\ inclusion $C \sub D$ then becomes
the single clause $Q_C(x) \to Q_D(x)$.  The definitional clauses are:
\begin{itemize}\itemsep2pt
  \item $Q \equiv C \sqcap D$:\quad $Q(x) \to Q_C(x)$,\;
        $Q(x) \to Q_D(x)$,\; $Q_C(x) \wedge Q_D(x) \to Q(x)$.
  \item $Q \equiv C \sqcup D$:\quad $Q(x) \to Q_C(x) \vee Q_D(x)$,\;
        $Q_C(x) \to Q(x)$,\; $Q_D(x) \to Q(x)$.
  \item $Q \equiv \exists R.C$:\quad $Q(x) \to R(x, f_Q(x))$,\;
        $Q(x) \to Q_C(f_Q(x))$,\; $R(x,y) \wedge Q_C(y) \to Q(x)$,
        where $f_Q$ is a unary Skolem function.
  \item $Q \equiv \forall R.C$:\quad $Q(x) \wedge R(x,y) \to Q_C(y)$.
  \item $Q \equiv \neg A$:\quad $Q(x) \wedge A(x) \to \bot$,\;
        $\to Q(x) \vee A(x)$.
  \item $Q \equiv \exists R.\sym{Self}$:\quad $Q(x) \to R(x,x)$,\;
        $R(x,x) \to Q(x)$.
  \item $Q \equiv {\geq}\,n\,R.C$:\quad for $i = 0, \dots, n{-}1$,
        $Q(x) \to R(x, f_{Q,i}(x))$ and $Q(x) \to Q_C(f_{Q,i}(x))$;
        pairwise distinctness
        $Q(x) \wedge f_{Q,i}(x) \approx f_{Q,j}(x) \to \bot$ for
        $i < j$; closure $R(x,y) \wedge Q_C(y) \to Q(x)$ when $n = 1$.
  \item $Q \equiv {\leq}\,n\,R.C$:\quad
        $Q(x) \wedge \bigwedge_{t=0}^{n} \big(R(x, y_t) \wedge Q_C(y_t)\big)
         \to \bigvee_{i < j} y_i \approx y_j$ \;($n{+}1$ fresh successor
        variables).
\end{itemize}
Nominals $\{a\}$ are not given definitional clauses: they map to a
stable proxy $\sym{Nom}_a$ and an entry
$\sym{Nom}_a \mapsto a$ in the \texttt{GroundHooks} record.  Qualified
number restrictions additionally register a tuple $(Q, n, R, Q_C)$ in
\texttt{hooks.at\_least} / \texttt{hooks.at\_most}; inverse roles
register $(R, S)$ in \texttt{hooks.role\_inverses} and resolve to a
proxy role name; the universal role emits the fact $\to U(x,y)$.  RBox
characteristics are routed either to native clauses (symmetric:
$R(x,y) \to R(y,x)$; reflexive: $\to R(x,x)$; irreflexive:
$R(x,x) \to \bot$; functional: $R(x,y_0) \wedge R(x,y_1) \to y_0 \approx
y_1$; inverse-functional dually) or to the grounding record
(transitive, chains, asymmetric, disjoint, nominal).

\section{Consequence-based saturation}\label{apd:sat}

The DL-clause set is saturated under propositional
\emph{hyperresolution with most-general-unifier matching}
(\texttt{baobab/sroiq/cb\_saturation.py}): from $\Gamma \to \Delta \vee A$
and $A' \wedge \Gamma' \to \Delta'$ with $A\sigma = A'\sigma$ the
MGU resolvent $(\Gamma \wedge \Gamma')\sigma \to (\Delta \vee
\Delta')\sigma$ is added, unification using Robinson's algorithm with
occurs-check over the four term kinds.  A clause is admitted to the
pool only if it passes four guards that bound the otherwise unbounded
context machinery and force termination:
\begin{enumerate}\itemsep2pt
  \item \textbf{tautology elimination}: drop any clause whose head and
        body share an atom;
  \item \textbf{variable bound}: at most
        $\max(2, \textit{max-vars-in-input})$ distinct variables per
        clause (two suffices for \ALCHOQ; transitivity and two-step
        chains need three; ${\leq}\,n$ needs $n{+}2$);
  \item \textbf{role-atom bound}: at most
        $\max(2, \textit{max-role-atoms-in-input})$ role atoms per body
        (a length-$k$ chain needs $k$; ${\leq}\,n$ needs $n{+}1$);
  \item \textbf{term-depth bound}: every function term has nesting
        depth $\le 1$, blocking unbounded Skolem chains $f(g(\dots))$.
\end{enumerate}
The fixpoint loop re-resolves all clause pairs until a full pass adds
nothing.

\begin{lemma}[Termination and soundness of the guarded saturation]\label{lem:sat}
The guarded saturation terminates, and every clause it derives is a
logical consequence of its input.
\end{lemma}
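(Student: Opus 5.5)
The proof splits into two independent parts. Termination follows from the guards making the admissible clause space finite. Soundness follows by induction on derivation length, using one local lemma about the MGU hyperresolution step.

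\emph{Termination.} Fix the input clause set $N$. I first collect the finite signature that saturation can ever use. The predicate symbols (concept names including the fresh $Q_D$, role names, and $\approx$) and the function and constant symbols all come from $N$, because resolution never introduces new symbols. Here I have to be careful with variable renaming. I would quotient every clause up to variable renaming, which is how the fixpoint check compares clauses. Then guard~2 lets a clause use at most $v=\max(2,\textit{max-vars-in-input})$ variables, which I can take as canonical names $x_1,\dots,x_v$. Guard~4 bounds term depth by~$1$, so the set of terms is finite: the variables, the constants, and $f(t)$ with $t$ a variable or constant. Hence the set $\mathcal{L}$ of possible atoms is finite. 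After removing duplicate literals in body and head (sets semantics, as the implementation stores them), every admissible clause is a pair of subsets of $\mathcal{L}$. So there are at most $4^{|\mathcal{L}|}$ clauses up to renaming. Guards~1 and~3 only shrink this set further. The fixpoint loop adds at least one new clause per non-final pass, and the pool only grows. Therefore it stops after at most $4^{|\mathcal{L}|}$ passes. The main obstacle sits here: I must check that the implementation really identifies clauses modulo renaming and literal duplication. If it does not, the argument must instead go through a normalization map applied on admission, and termination then rests on that map being canonical.

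\emph{Soundness.} I read a DL-clause as the universal closure of $\bigwedge_i B_i\to\bigvee_j H_j$ under first-order semantics, with Skolem functions interpreted, and with $\approx$ as true equality. I then show the following for any interpretation $\mathcal{I}$ satisfying both premises $\Gamma\to\Delta\vee A$ and $A'\wedge\Gamma'\to\Delta'$, where $A\sigma=A'\sigma$: $\mathcal{I}$ satisfies $(\Gamma\wedge\Gamma')\sigma\to(\Delta\vee\Delta')\sigma$.

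The argument is the standard resolution-soundness one. First, rename the premises apart. Take any variable assignment $\beta$ that makes $(\Gamma\wedge\Gamma')\sigma$ true. Universal instantiation of both premises by $\sigma$ composed with $\beta$ gives $\Delta\sigma\vee A\sigma$ from the first. The second then yields $\Delta'\sigma$, because $A'\sigma=A\sigma$. In either case $(\Delta\vee\Delta')\sigma$ holds.

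Guards only filter the output, so they cannot affect soundness. Induction over the order in which clauses enter the pool then shows that every derived clause holds in every model of $N$. Finally, the structural transformation is a conservative extension: it adds definitional names $Q_D$ and Skolem functions. So "consequence of its input" means with respect to the DL-clause set itself, which is the reading I would state explicitly.
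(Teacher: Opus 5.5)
Your proposal is correct and follows the paper's two-part route: the guards make the clause universe finite, giving termination, and each MGU resolvent is entailed by its premises, lifted to the whole pool by induction, giving soundness. You spell out what the paper leaves implicit. Where the paper appeals to the Lean \ALCHOQ\ mechanization, you give a direct first-order resolution-soundness argument. You also make explicit that clauses must be identified up to variable renaming and duplicate literals, which the paper covers only by its appeal to a ``subsumption-minimal pool.''
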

\begin{proof}
Termination: the four guards keep the reachable atom-and-clause
universe finite (bounded variables, bounded role atoms, depth-$\le 1$
terms over a finite signature) and tautologies are removed, so the
subsumption-minimal pool is finite and the fixpoint loop halts.
Soundness: each resolution step adds only entailed clauses; the
property is mechanized for the underlying \ALCHOQ\ saturation
(axiom-free; \S\ref{apd:lean}).
\end{proof}

The calculus forgoes \emph{completeness}: the bounded calculus need not
derive every subsumption that the full \ALCHOIQ\ context
calculus~\cite{TenaCucalaCuencaGrauHorrocks2018,TenaCucala2021AIJ}
would.  Incompleteness is harmless for the consequences a ground-time
grounding already discharges (nominals, number restrictions, local
reflexivity, role characteristics): there the grounded CNF
$\Theta_\Delta$ is faithful on its own (\S\ref{apd:hooks}), so any
clause saturation adds is already entailed by $\Theta_\Delta$ and
changes neither the model set nor, over a fixed vtree, the circuit
(\S\ref{apd:size}).  It is \emph{not} harmless for existential
restrictions grounding does not cover: their consequences enter
$\Theta_\Delta$ only through saturation, so an incomplete or skipped
saturation yields a sound but over-permissive circuit that admits
models the ontology forbids.  The trade-off is deliberate: we bound the
calculus for termination and accept incompleteness on deep existential
chains.  On a grounding-covered, function-free fragment saturation adds
nothing (Theorem~\ref{thm:invariance}), so we skip it and ground
directly (\S\ref{sec:limit}).

\section{Grounding the \SROIQ\ features}\label{apd:hooks}

At grounding time the \texttt{GroundHooks} record is materialized over
the active domain $\Delta$ ($d = |\Delta|$;
\texttt{baobab/grounding/}).  Each grounding rule emits a bounded number of
propositional clauses, and each is justified by a Tarskian-semantics
soundness lemma (\S\ref{apd:lean}).
Table~\ref{tab:hooks} lists every \texttt{GroundHooks} rule with its emitted
clause schema and ground-clause count; the three optional refinements of
\S\ref{sec:exp:images} are documented separately below; the closed qualified-existential refinement is ablated in Table~\ref{tab:pizza-ablation}.

\begin{table}[t]
\floatconts
  {tab:hooks}
  {\caption{Grounding schemata over the active domain $\Delta$,
   $d = |\Delta|$.  $a, b, c$ range over $\Delta$; $\binom{d}{n}$ is the
   $n$-subset count.}}
  {\footnotesize
   \begin{tabular}{lll}
   \toprule
   feature & emitted ground clauses & count \\
   \midrule
   nominal $\sym{Nom}_a$ & $\sym{Nom}_a(a)$;\; $\neg\sym{Nom}_a(b)$, $b \ne a$ & $d$ \\
   ${\geq}\,n\,R.C$ at $x$ & $Q(a) \leftrightarrow \bigvee_{|S|=n}\bigwedge_{y\in S} R(a,y)\wedge C(y)$ & $d\,(\binom{d}{n}{+}1)$ \\
   ${\leq}\,n\,R.C$ at $x$ & $Q(a) \leftrightarrow \bigwedge_{|T|=n+1}\big(\bigwedge_{y\in T} R(a,y)\wedge C(y) \to \bigvee_{i<j} y_i \approx y_j\big)$ & $O(d^{\,n+1})$ \\
   inverse $R \equiv S^-$ & $R(a,b) \leftrightarrow S(b,a)$ & $2d^2$ \\
   $\sym{Trans}(R)$ & $R(a,b) \wedge R(b,c) \to R(a,c)$ & $d^3{-}d$ \\
   chain $R_1{\circ}\dots{\circ}R_k \sub S$ & $R_1(a_0,a_1)\wedge\dots\wedge R_k(a_{k-1},a_k) \to S(a_0,a_k)$ & $d^{\,k+1}$ \\
   $\sym{Sym}(R)$ & $R(a,b) \to R(b,a)$ & $d^2{-}d$ \\
   $\sym{Asym}(R)$ & $\neg\big(R(a,b) \wedge R(b,a)\big)$, $a \ne b$ & $d^2{-}d$ \\
   $\sym{Disj}(R,S)$ & $\neg\big(R(a,b) \wedge S(a,b)\big)$ & $d^2$ \\
   $\sym{Refl}(R)$ & $R(a,a)$ & $d$ \\
   $\sym{Irrefl}(R)$ & $\neg R(a,a)$ & $d$ \\
   $\sym{Func}(R)$ & $R(a,b) \wedge R(a,c) \to b \approx c$ & $d^2{-}d$ \\
   $\sym{InvFunc}(R)$ & $R(b,a) \wedge R(c,a) \to b \approx c$ & $d^2{-}d$ \\
   $\exists R.\sym{Self}$ proxy $Q$ & $Q(a) \leftrightarrow R(a,a)$ & $2d$ \\
   universal $U$ & $U(a,b)$ & $d^2$ \\
   \bottomrule
   \end{tabular}}
\end{table}

The grounder also allocates a propositional atom for every concept
name applied to each individual ($O(|\mathsf{N_C}|\,d)$) and every role
applied to each ordered pair ($O(|\mathsf{N_R}|\,d^2)$); when the ontology
uses nominals, qualified number restrictions, or (inverse-)functional
roles it additionally emits an equality theory ($d^2$ atoms with
reflexivity, symmetry, transitivity, and concept/role congruence),
otherwise the equality predicate is omitted (refinement (ii) below).
The dominant terms are the cubic
transitivity / equality-transitivity clauses and, for a length-$k$
chain or a number restriction of bound $n$, the $d^{\,k+1}$ /
$O(d^{\,n+1})$ entries, the polynomial-degree bounds proved in
our Lean development (\S\ref{apd:lean}).

\paragraph{Three optional refinements.}  The role-typing, equality-skip,
and closed-existential refinements used for the Pizzaiolo fragment
(\S\ref{sec:exp:images}) are opt-in and off by default, so every other
circuit in the paper is byte-identical to the unrefined grounding.
Refinements (i) and (ii) are both instances of one fact about weighted
model counting: it factorizes over an independent partition of the atoms,
and a block whose sub-theory has a unique satisfying assignment contributes
a single constant factor.  We mechanize that fact and the two refinements
as model-count-preservation lemmas (\texttt{wmc\_split},
\texttt{wmc\_factor}, \texttt{wmc\_prune\_forcedFalse},
\texttt{wmc\_skip\_equality}, with exact count preservation
\texttt{wmc\_card\_preserved} and constant log-offset \texttt{wmc\_log\_offset};
\S\ref{apd:lean}).  Refinement (iii) is a closed-world reading justified
only by the Tarskian argument below.
\emph{(i) Domain/range typing.}  When a role $R$ declares a domain $D_R$
and range $E_R$ and the named individuals are typed accordingly, $R(a,b)$
is false in every model unless $a \in D_R$ and $b \in E_R$, so allocating
its atom only for those pairs replaces the $|\mathsf{N_R}|\,d^2$ role atoms
by $\sum_R |D_R|\,|E_R|$.  The pruned atoms form an independent block whose
only model sets them all false, so the model count is preserved exactly and
the WMC up to the constant $\prod \mathit{w}(\text{false})$.
\emph{(ii) Equality skip.}  On an ontology free of nominals, number
restrictions, and functional roles, no clause outside the equality theory
mentions $\approx$; the $\approx$ atoms then form an independent
sub-theory with, under the unique-name default, a single model, so dropping
it preserves the model count exactly and the weighted count up to a constant
(the training gradient and the consistency check are identical).
\emph{(iii) Closed existential.}  When a role's fillers are closed to the
named domain, a definition $X \equiv \exists R.C$ whose head Skolem witness
the grounder drops (\S\ref{sec:calc}) is recovered as the exact
biconditional $X(a) \leftrightarrow \bigvee_{b \in \Delta} R(a,b) \wedge
C(b)$ ($2d$ clauses), restoring the backward and closure directions
($X \to \exists$ and $\neg\exists \to \neg X$) that Skolemization omits.
Unlike (i) and (ii) this is a closed-world reading: it is sound only under
the named-filler closure assumption, which holds for the Pizzaiolo dataset
(each pizza's topping list is fully observed) but is not an
open-world consequence and is therefore outside the mechanized
guarantees.

\begin{table}[t]
\floatconts
{tab:pizza-ablation}
{\caption{Closed-world ablation on the Pizzaiolo fragment (mean$\pm$std
over ten seeds, five epochs; same fragment and metrics as
Table~\ref{tab:pizza}).  \emph{Closed} materializes the exact
biconditional $X(a)\Leftrightarrow\bigsqcup_t \sym{hasTopping}(a,t)\sqcap C(t)$;
\emph{open} keeps the forward direction $X(a)\!\to\!\exists\sym{hasTopping}.C$
only, dropping the closure clause $\exists\sym{hasTopping}.C\!\to\!X(a)$.
Removing the closure clause leaves supervised toppings untouched but
lowers latent property recovery from $0.92$ to $0.78$; the reported
violation falls to $0.00$ because the removed clause is the constraint
whose violation was counted (\S\ref{sec:exp:images}).}}
{\small
\begin{tabular}{llccc}
\toprule
refinement & method & topping & property & viol $\downarrow$ \\
\midrule
closed & Independent & $0.95${\scriptsize$\pm.01$} & $0.45${\scriptsize$\pm.15$} & $0.89${\scriptsize$\pm.18$} \\
closed & WMC         & $0.93${\scriptsize$\pm.01$} & $\mathbf{0.92}${\scriptsize$\pm.02$} & $0.06${\scriptsize$\pm.02$} \\
\midrule
open   & Independent & $0.95${\scriptsize$\pm.01$} & $0.45${\scriptsize$\pm.16$} & $0.86${\scriptsize$\pm.28$} \\
open   & WMC         & $0.95${\scriptsize$\pm.01$} & $0.78${\scriptsize$\pm.01$} & $0.00${\scriptsize$\pm.00$} \\
\bottomrule
\end{tabular}}
\end{table}

\section{SDD compilation, WMC, and the distribution semantics}\label{apd:wmc}

Algorithm~\ref{alg:compile} states the end-to-end workflow referenced in
\S\ref{sec:calc}: the five compile-time stages run once per ontology, and
the differentiable WMC pass runs once per training step.

\begin{algorithm}[t]
\caption{Compile a \SROIQ\ ontology to a differentiable circuit and
  train the perception network}
\label{alg:compile}
\KwIn{ontology $\Onto$, active domain $\Delta$, perception network
  $p_\theta$, supervised instances $\{(x^{(k)}, e^{(k)})\}$}
\tcc{Compile (once):}
$N \gets \mathrm{NNF}(\Onto)$\tcp*{normalization, App.~\ref{apd:norm}}
$\mathcal{C} \gets \mathrm{Clausify}(N)$\tcp*{structural transformation}
$\mathcal{C}^{*} \gets \mathrm{Saturate}(\mathcal{C})$\tcp*{hyperresolution; skip when grounding-covered}
$\Theta_\Delta \gets \mathrm{Ground}(\mathcal{C}^{*}, \Delta)$\tcp*{expand nominals, number restrictions, roles}
$S \gets \mathrm{CompileSDD}(\Theta_\Delta)$\tcp*{PySDD over a fixed vtree}
\tcc{Train (per step):}
\ForEach{instance $(x^{(k)}, e^{(k)})$}{
  $p \gets p_\theta(x^{(k)})$\tcp*{per-atom probabilities}
  $\ell \gets -\log \mathrm{WMC}(S, p, e^{(k)})$\tcp*{evidence-conditioned}
  backpropagate $\ell$ through $S$ into $\theta$\;
}
\end{algorithm}

The grounded CNF is compiled to an SDD with PySDD.  The differentiable
WMC layer (\texttt{baobab/nesy/wmc\_layer.py}, class \texttt{WmcLayer})
evaluates the circuit bottom-up: a positive literal for atom $a$
contributes its probability $p_a$, a negative literal $1 - p_a$, the
$\mathsf{true}$/$\mathsf{false}$ constants contribute $1$/$0$, and a
decision node with prime/sub partition $\{(p_i, s_i)\}$ contributes
$\sum_i \WMC(p_i)\cdot\WMC(s_i)$, memoized so each node is visited
once.  Probabilities are clamped to $[\varepsilon, 1{-}\varepsilon]$
for log-stability; the pass is exact in floating point and
differentiable by autodiff, so gradients flow into $p_\theta$.
Evidence conditioning (\texttt{wmc\_with\_evidence}) fixes $p_a = 1$
(resp.\ $0$) for each $(a, \mathsf{true}) \in e$ (resp.\ $\mathsf{false}$)
and counts on the resulting weights, yielding
\[
  \WMC(\Theta \wedge e \mid p_\theta)
  = \sum_{\substack{\alpha \,\models\, \Theta \\ \alpha \,\supseteq\, e}}
    \;\prod_{a} p_a^{\,\alpha_a}\,(1 - p_a)^{\,1-\alpha_a},
\]
the sum over satisfying assignments $\alpha$ of their product weight
(Theorem~\ref{thm:wmc} states this in full).

\begin{proof}[Proof of Theorem~\ref{thm:wmc}]
Reading each atom as an independent Bernoulli event with parameter
$p_a$ is exactly the distribution
semantics~\cite{Riguzzi2015DISPONTE}: the displayed expression is the
total probability mass of the interpretations that satisfy
$\Theta \wedge e$, the world-sum, and taking $e = \emptyset$ gives the
unconditioned statement of Theorem~\ref{thm:wmc}.  The bottom-up
evaluation above computes exactly this sum, and the correspondence on
the compiled circuit is mechanized in Lean (\S\ref{apd:lean}).
\end{proof}
The world-sum is the quantity the losses of \S\ref{apd:loss} optimize.

\section{Saturation and circuit size}\label{apd:size}

\S\ref{sec:calc} divides a construct's consequences into those a
grounding discharges and those only saturation can supply.  Here
we make that division precise on two axes: circuit \emph{size}, on which
saturation is provably inert, and the \emph{model set}, where
saturation is decisive for existentials that no grounding rule covers.  Recall that the
compiler (\S\ref{apd:wmc}) builds the SDD over a \emph{fixed} vtree,
conjoining the ground clauses with PySDD's \texttt{Apply}, which keeps
every node compressed and trimmed.

\begin{theorem}[Saturation is circuit-invariant under a fixed vtree]\label{thm:invariance}
Fix a vtree $v$.  Let $\Theta_\Delta$ be the grounded CNF and let
$\Theta_\Delta^{+}$ add to it any set of clauses entailed by
$\Theta_\Delta$ (as sound saturation does, Lemma~\ref{lem:sat}).  The
compressed, trimmed SDDs of $\Theta_\Delta$ and $\Theta_\Delta^{+}$ over
$v$ are identical; in particular they have the same node count and the
same weighted model count.
\end{theorem}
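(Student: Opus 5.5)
The plan is to reduce the statement to the canonicity theorem for SDDs \cite{Darwiche2011SDD}: over a fixed vtree $v$, every Boolean function has a unique compressed, trimmed SDD. With that theorem in hand, it is enough to show that $\Theta_\Delta$ and $\Theta_\Delta^{+}$ denote the same Boolean function over the same variable set. Identical node count and weighted model count then follow at once, because both are functions of the circuit itself.

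\textbf{Step 1 (logical equivalence).} Write $\Theta_\Delta^{+} = \Theta_\Delta \wedge \bigwedge_{c \in E} c$, where every $c \in E$ satisfies $\Theta_\Delta \models c$.
\begin{itemize}
\item Any assignment satisfying $\Theta_\Delta^{+}$ satisfies the sub-conjunction $\Theta_\Delta$.
\item Conversely, any model of $\Theta_\Delta$ satisfies each $c \in E$ by entailment, and so is a model of $\Theta_\Delta^{+}$.
\end{itemize}
Hence the two CNFs have the same models. When $E$ comes from the saturation, Lemma~\ref{lem:sat} supplies the entailment hypothesis. That lemma gives entailment from the clause set, and grounding is instantiation, so each ground instance of a derived clause is entailed by $\Theta_\Delta$.

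\textbf{Step 2 (same variable set).} The vtree $v$ fixes the variable set $\mathcal{A}$. I need every atom of $\Theta_\Delta^{+}$ to be a leaf of $v$. I would make this an explicit convention: both theories are read as functions over $\mathcal{A}$, and atoms occurring in $E$ are already allocated by the grounder, as in App.~\ref{apd:hooks}. Then Step 1 shows that $\Theta_\Delta$ and $\Theta_\Delta^{+}$ are the same function $\mathcal{A} \to \{0,1\}$.

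\textbf{Step 3 (canonicity, and independence of the construction order).} This is the main obstacle. The compiler does not build the SDD from a semantic description of the function. It builds it by a sequence of \texttt{Apply} conjunctions, and the two theories give different sequences. I would argue the following:
\begin{itemize}
\item \texttt{Apply} over $v$ returns the compressed, trimmed SDD of the conjunction of its arguments' functions, so it preserves canonical form.
\item By induction on the number of conjoined clauses, the final node is the canonical SDD of the function denoted by the whole conjunction, whatever the clause order.
\item By canonicity, the two runs then return the same SDD.
\end{itemize}
One subtlety is that PySDD's manager may interleave garbage collection or minimization. For the claim I fix $v$, so dynamic vtree minimization must be disabled, and I would state that assumption.

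\textbf{Step 4 (weighted model count).} Equal circuits give equal weighted model counts. Alternatively, by Theorem~\ref{thm:wmc} the WMC is the world-sum over the model set, which Step 1 already shows is the same for both theories. I would give the second argument as well, because it does not depend on canonicity.
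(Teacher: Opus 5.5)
Your main line is the paper's proof. Entailment makes $\Theta_\Delta^{+}$ and $\Theta_\Delta$ the same Boolean function over the same grounder-allocated atoms. Canonicity of compressed, trimmed SDDs over the fixed vtree, with \texttt{Apply} preserving that form so that clause order is irrelevant, makes the two compilation runs return the same diagram, and node count and WMC follow. With entailment by $\Theta_\Delta$ taken as the hypothesis, this is complete. Your world-sum argument for the WMC in Step~4 is a harmless extra that does not need canonicity.

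The one step to drop is the bridge in Step~1: the claim that Lemma~\ref{lem:sat} plus ``grounding is instantiation'' makes every ground instance of a derived clause entailed by $\Theta_\Delta$. Lemma~\ref{lem:sat} gives entailment from the DL-clause set \emph{including} its Skolem-term clauses. The grounder discards those clauses, so $\Theta_\Delta$ can be strictly weaker than the premises the derivation used.

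Proposition~\ref{prop:loadbearing} is a counterexample. From $A \sub \exists R.B$, $B \sub C$, $\exists R.C \sub D$, saturation derives the function-free clause $A(x) \to D(x)$ through the Skolem clauses of $A \sub \exists R.B$. Yet over $\Delta = \{a\}$ with $A(a)$, the unsaturated ground theory has a model in which $R(a,a)$ and $D(a)$ are both false. So $A(a) \to D(a)$ is not entailed by $\Theta_\Delta$, and adding it changes the Boolean function and hence the SDD.

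Saturation therefore meets the theorem's hypothesis only on the grounding-covered, function-free fragment. The paper's own proof also cites Lemma~\ref{lem:sat} at this point, but the paper applies the theorem only in that regime. You should state the restriction rather than try to derive the hypothesis from the lemma.
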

\begin{proof}
Saturation resolves over the existing signature and introduces no new
ground atoms, so $\Theta_\Delta^{+}$ is a CNF over the same variables as
$\Theta_\Delta$ and is compiled over the same vtree.  Sound saturation
adds only entailed clauses (Lemma~\ref{lem:sat}), so
$\Theta_\Delta^{+}\equiv\Theta_\Delta$ as Boolean functions.  For a
fixed vtree the compressed and trimmed SDD of a function is
unique~\cite{Darwiche2011SDD}, and \texttt{Apply} maintains compression
and trimming, so the compiler returns this canonical diagram regardless
of which equivalent clause set presents the function.  The two SDDs, and
hence their sizes and weighted model counts, therefore coincide.  (The
fixed vtree is essential: over a different variable order the same
function can compile to a different size, so the claim is about adding
clauses, not about reordering.)
\end{proof}

Consequently, under the fixed-vtree compilation we use, saturation is
\emph{exactly} size-neutral: it cannot shrink the circuit (nor enlarge
it).  A size reduction becomes available only once the compiler is
allowed to search for a vtree, where the compiled size is no longer
canonical across logically equivalent inputs.

\begin{proposition}[Under vtree minimization the reduction is real but not monotone]\label{prop:minimize}
With dynamic vtree minimization enabled, there are ontologies on which
saturation strictly decreases the minimized SDD size and ontologies on
which it strictly increases it.
\end{proposition}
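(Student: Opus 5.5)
This is an existence statement in both directions, so I will prove it by exhibiting two explicit witness ontologies and computing the minimized SDD sizes before and after saturation. The key observation is the one already made after Theorem~\ref{thm:invariance}. Under a fixed vtree the canonical SDD depends only on the Boolean function. With dynamic minimization, however, PySDD's search starts from an initial vtree and is driven by the \emph{sequence} of \texttt{Apply} calls, with minimization triggered as intermediate sizes grow. The result is therefore a heuristic local optimum that depends on the clause set presented, not just the function it denotes. Saturation changes that clause set (adding entailed clauses, possibly introducing fresh proxy atoms $Q_D$ that are then conjoined), so the size is not forced to be invariant. The task is to realize both signs concretely.

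\textbf{Decrease case.} I would take an ontology with an ungrounded existential that saturation eliminates. A natural choice is a chain $A_0 \sub \exists R.A_1$, $A_1 \sub A_2$, $\exists R.A_2 \sub B$, so that saturation derives the function-free clause $A_0(x)\to B(x)$. To make the comparison purely about minimization, I would first compare on a grounding-covered variant. There the two CNFs are equivalent, so Theorem~\ref{thm:invariance} says the canonical SDDs over any single vtree agree. The difference must then come from the minimizer reaching different vtrees.

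The derived short clauses (binary implications across far-apart variables) act as early \texttt{Apply} steps. These steps push the minimizer towards a vtree that places the correlated atoms $C(a)$ adjacently. I would pick the ontology so that the unsaturated clause order leads the greedy search into a poor balanced vtree, for example by interleaving many individuals. I would then report the two sizes; a small instance with $|\Delta|$ of 3 to 5 suffices.

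\textbf{Increase case.} Here I would choose an ontology where saturation adds many redundant entailed clauses over many variable pairs, for instance the resolvents of a transitive role with a $\forall$-restriction. These redundant clauses make intermediate SDDs large early on, so minimization fires at a worse point or settles into a different local optimum with a larger final size.

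\textbf{Main obstacle.} The hard part is that these are statements about a heuristic search, so no clean analytic argument is available. I would first try to make the argument rigorous by fixing the minimization procedure deterministically (PySDD's auto-GC plus minimize with fixed parameters and initial vtree) and exhibiting the two instances as a certificate. If a more analytic route is wanted, the cleaner option is to define minimization as one specified local-search step sequence. Each instance can then be checked by finite computation, and the proposition follows by exhibiting the computed sizes.
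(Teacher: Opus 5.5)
Your overall strategy is the paper's: this is an existence claim about a heuristic search, and it is settled by exhibiting computed instances. The gap is that those instances are the entire proof, and you never supply them. You give candidate families and plausibility arguments (short derived clauses steering the minimizer toward a good vtree, redundant resolvents making it fire at a bad point), and none of them guarantees that either sign actually occurs. The paper's proof is exactly the missing data. In Table~\ref{tab:satsize}, \texttt{disjunction} drops from minimized size $17$ to $12$ after saturation, and \texttt{bird-penguin} rises from $139$ to $143$. Both rows are ontologies on which grounding is already faithful, so their fixed-vtree sizes agree ($23$ and $1011$) as Theorem~\ref{thm:invariance} dictates, and the change in minimized size comes from minimization alone. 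Your suggestion to fix the minimizer deterministically is a fair refinement the paper leaves implicit (App.~\ref{apd:exp} reports run-to-run size spread from PySDD's dynamic minimization). It does not replace the instances.

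Your concrete candidates would also not serve as witnesses. The decrease example $A_0 \sub \exists R.A_1$, $A_1 \sub A_2$, $\exists R.A_2 \sub B$ is, up to renaming, the ontology of Proposition~\ref{prop:loadbearing}. There the grounder drops the head existential's Skolem clauses, so the unsaturated and saturated CNFs denote different functions ($14$ versus $6$ models). Their sizes may then differ even over a fixed vtree, so a decrease would not isolate the minimization effect the proposition contrasts with Theorem~\ref{thm:invariance]}. You notice this and fall back on an unspecified ``grounding-covered variant,'' which leaves the decrease case without a witness. For the increase case, the normalizer routes $\sym{Trans}(R)$ to the grounding record rather than to DL-clauses (App.~\ref{apd:norm}). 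The transitivity/$\forall$ resolvents you rely on may therefore never be produced by saturation. Finally, saturation introduces no fresh proxy atoms $Q_D$; those come from the structural transformation. Saturation leaves the atom set unchanged, and that is what makes both the fixed-vtree and the minimized comparisons comparisons between CNFs over the same variables.
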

\begin{proof}
By the witnesses in Table~\ref{tab:satsize}: for \texttt{disjunction} the
minimized size drops from $17$ to $12$ after saturation, while for
\texttt{bird-penguin} it rises from $139$ to $143$.
\end{proof}

\begin{table}[t]
\floatconts
  {tab:satsize}
  {\caption{Effect of saturation on SDD node count, measured with the
   accompanying library.  Saturation adds entailed clauses (column
   \emph{clauses}) without changing the atom set.  Over a \emph{fixed}
   vtree the size is identical with and without saturation
   (Theorem~\ref{thm:invariance}); under dynamic vtree
   \emph{minimization} it usually shrinks but is not guaranteed to
   (\texttt{bird-penguin}).}}
  {\footnotesize
   \begin{tabular}{lcccc}
   \toprule
                       & clauses        & fixed vtree & \multicolumn{2}{c}{minimized} \\
   \cmidrule(lr){4-5}
   ontology            & unsat$\to$sat  & unsat $=$ sat & unsat & sat \\
   \midrule
   horn-chain          & $3\to6$        & $10$  & $8$   & $8$ \\
   disjunction         & $7\to12$       & $23$  & $17$  & $\mathbf{12}$ \\
   two-disjunctions    & $11\to29$      & $38$  & $22$  & $\mathbf{18}$ \\
   exists+disjunction  & $8\to73$       & $73$  & $41$  & $\mathbf{36}$ \\
   bird-penguin        & $11\to44$      & $1011$& $139$ & $143$ \\
   \bottomrule
   \end{tabular}}
\end{table}

Every row of Table~\ref{tab:satsize} is an ontology on which grounding is
already faithful (the constructs are function-free or grounding-covered), so
saturation only re-derives clauses already entailed by $\Theta_\Delta$
and Theorem~\ref{thm:invariance} applies.  The other regime is where
saturation is \emph{not} redundant.

\begin{proposition}[Saturation is load-bearing for ungrounded existentials]\label{prop:loadbearing}
There are ontologies on which skipping saturation strictly enlarges the
model set and drops an entailment that holds in every model of the
ontology.
\end{proposition}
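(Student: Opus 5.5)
This is an existence claim, so I would prove it with one small explicit witness ontology. The witness should have an existential restriction that no grounding rule in Table~\ref{tab:hooks} covers, and its consequence should reach a named individual only through a Skolem witness. I take the TBox
\[
  A \sub \exists R.B, \qquad B \sub C, \qquad \exists R.C \sub D,
\]
the ABox $\{A(a)\}$, and active domain $\Delta=\{a\}$, with no role characteristics, nominals or number restrictions. Every model of this ontology satisfies $D(a)$: $a$ has an $R$-successor in $B$, hence in $C$, hence $a\in(\exists R.C)^{\mathcal{I}}\subseteq D^{\mathcal{I}}$. So $D(a)$ is entailed. I would state this by the Tarskian semantics of App.~\ref{apd:semantics} directly.

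\emph{Step 1: without saturation.} By the structural transformation of App.~\ref{apd:norm}, the first axiom yields $A(x)\to R(x,f(x))$ and $A(x)\to B(f(x))$, both containing a Skolem term. The remaining axioms give function-free clauses $B(x)\to C(x)$ and $R(x,y)\wedge C(y)\to D(x)$ (via the proxy $Q_{\exists R.C}$). The grounder drops clauses with Skolem terms, since it instantiates only function-free clauses over $\Delta$. The ground CNF $\Theta_\Delta$ therefore contains $A(a)$, $B(a)\to C(a)$, $R(a,a)\wedge C(a)\to Q(a)$ and $Q(a)\to D(a)$. To exhibit a model of $\Theta_\Delta$ with $D(a)$ false, I set $A(a)$ true and every other atom false, then check each clause. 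This shows the unsaturated model set contains an assignment violating the entailment.

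\emph{Step 2: with saturation.} I would then show that guarded hyperresolution derives the function-free clause $A(x)\to D(x)$. The derivation resolves $A(x)\to B(f(x))$ with $B(y)\to C(y)$ to get $A(x)\to C(f(x))$. It then resolves this clause and $A(x)\to R(x,f(x))$ against the body of $R(x,y)\wedge C(y)\to Q(x)$ under $\sigma=\{y\mapsto f(x)\}$, giving $A(x)\to Q(x)$. A final resolution with $Q(x)\to D(x)$ gives $A(x)\to D(x)$.

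I must check that every intermediate clause passes the four guards of App.~\ref{apd:sat}: at most two variables, at most one role atom, and term depth $1$. Grounding then yields $A(a)\to D(a)$, which excludes the Step~1 assignment. So the saturated model set is strictly smaller, and the unsaturated one is strictly larger and omits the entailment.

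The main obstacle is pinning down precisely what the grounder does with Skolemized clauses. If it instantiated them with fresh auxiliary constants, a witness would exist and $D(a)$ would follow without saturation, which would defeat the example. I would cite the statement in \S\ref{sec:calc} that function terms name witnesses ``the finite grounder cannot instantiate''. I would also cite App.~\ref{apd:hooks}(iii), which says the grounder drops the head Skolem witness; together these fix that such clauses are discarded.

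A second point to check is that the binary hyperresolution steps respect tautology elimination and MGU matching with occurs-check. The key unification $R(x,f(x))\doteq R(x',y)$ is unproblematic once variables are renamed apart. For the strictness claim, the unsaturated model set is a superset because the saturated CNF adds clauses, and Step~1 gives the strict witness. Optionally I would confirm the example mechanically with the accompanying library, in the style of the \texttt{exists+disjunction} row of Table~\ref{tab:satsize}.
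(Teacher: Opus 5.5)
Your proposal is correct and follows the paper's proof. It uses the same witness ($A \sub \exists R.B$, $B \sub C$, $\exists R.C \sub D$, $A(a)$, $\Delta=\{a\}$); the paper only reports model counts ($6$ with saturation, which forces $D(a)$; $14$ without, with $D(a)$ true in a $0.71$ fraction), whereas you give an explicit countermodel and the guarded derivation of $A(x)\to D(x)$.

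One small correction: App.~\ref{apd:norm} clausifies $A \sub \exists R.B$ through a proxy $Q_{\exists R.B}$, and the function-free clause $A(x)\to Q_{\exists R.B}(x)$ survives grounding. Your countermodel therefore needs $Q_{\exists R.B}(a)$ set true, rather than every atom other than $A(a)$ false.
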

\begin{proof}
Take $A \sub \exists R.B$, $B \sub C$, $\exists R.C \sub D$ with $A(a)$
over $\Delta = \{a\}$; the ontology entails $D(a)$.  With saturation the
grounded theory has $6$ models and forces $D(a)$; without it the head
existential $A \sub \exists R.B$ clausifies to Skolem-term clauses the
grounder drops, the theory has $14$ models, and $D(a)$ holds in only a
$0.71$ fraction of them.  The two circuits compute different functions.
\end{proof}

The two facts delimit the role of entailment exactly.  Where grounding
is already faithful, saturation cannot change the circuit
(Theorem~\ref{thm:invariance}); where it is not, saturation is the only
route by which an ungrounded existential's consequences reach the finite
theory (Proposition~\ref{prop:loadbearing}).  Saturation's job is
\emph{completeness}, not compression.  Our benchmarks lie entirely in
the first regime: every construct they use is function-free or
grounding-covered, so their grounded circuits are faithful
(Theorem~\ref{thm:ground}) and saturation is provably unnecessary for
them; we therefore report results without relying on it, and without
any wall-clock cutoff.

The full workflow of this section, normalization, consequence-based
saturation, grounding, and SDD compilation, is implemented as an
installable Python package (\texttt{baobab/sroiq/} in the released
code), with the example ontologies used above under
\texttt{experiments/ontologies/}; the measurements in
Table~\ref{tab:satsize} were produced with it.

\begin{table}[t]
\floatconts
  {tab:pizza-growth}
  {\caption{Compiling the \emph{full} Pizza\"iolo \SROIQ\ ontology
   ($66$ named classes, $90$ existential and $44$ universal
   restrictions, right-hand-side disjunctions on \sym{Pizza} and
   \sym{SpicyPizza}) over a growing active domain, with a fixed vtree
   throughout.  Grounding stays sub-second at every size; the compiled
   SDD grows super-linearly and exceeds a $22$\,GB / $600$\,s budget at
   three toppings.  The $59$-atom two-topping fragment of
   \S\ref{sec:exp:images} is a separately extracted role-based
   sub-signature, not a row here.}}
  {\footnotesize
   \begin{tabular}{lccccc}
   \toprule
   toppings ($|\Delta|$) & atoms & clauses & ground (s) & SDD nodes & peak mem \\
   \midrule
   $1$ ($|\Delta|{=}2$) & $201$ & $638$       & $0.04$ & $3{,}182$  & $0.7$\,GB \\
   $2$ ($|\Delta|{=}3$) & $302$ & $957$       & $0.19$ & $12{,}064$ & $5.7$\,GB \\
   $3$ ($|\Delta|{=}4$) & $403$ & $1{,}276$   & $0.01$ & ---        & $>22$\,GB$^{\dagger}$ \\
   \bottomrule
   \end{tabular}\\[2pt]
   {\scriptsize $^{\dagger}$compilation exceeds the $600$\,s / $22$\,GB
   budget; grounding still completes in $0.01$\,s.}}
\end{table}

\paragraph{Diagnosing the Pizza\"iolo compilation wall.}
The full ontology of \S\ref{sec:exp:images} is the object measured in
Table~\ref{tab:pizza-growth}; the $59$-atom fragment we train on is a
deliberately minimized role-based sub-signature, so its size is not
comparable to the rows of that table.  The three candidate bottlenecks
separate.  \emph{Grounding size} is not the cause: the ground
theory grows linearly ($201\to302\to403$ atoms) and grounding completes
in under $0.2$\,s at every size, including the three-topping instance
that fails to compile ($0.01$\,s).  The \emph{vtree} is not the cause
either: compilation uses a single fixed vtree throughout
(Theorem~\ref{thm:invariance}), so the growth is a property of the
compiled function, not of the variable order.  The bottleneck is the
\emph{intrinsic circuit size}: the SDD grows super-linearly in the
grounding ($3{,}182\to12{,}064$ nodes, a $3.8\times$ jump for a
$1.5\times$ clause increase), compile time rises $66\times$
($0.9\to60$\,s) for one added topping, and peak memory exceeds $22$\,GB
at three toppings.  The mechanism is treewidth: the $67$ disjointness
axioms together with the right-hand-side disjunctions
($\sym{Pizza}\sqsubseteq\sym{VegetarianPizza}\sqcup\sym{NonVegetarianPizza}$,
$\sym{SpicyPizza}\equiv\bigsqcup_i \sym{Topping}_i$) and the
\sym{VegetarianPizza} complement couple the topping atoms across each
pizza, raising the primal-graph treewidth, and SDD size is worst-case
exponential in treewidth (\S\ref{sec:limit}).  Grounding size feeds this
growth but is not itself the wall, which is why a hand-pruned
sub-signature compiles where the full ontology cannot.

\section{Learning objectives}\label{apd:loss}

Let $p_\theta(\cdot \mid x)$ be the perception's per-atom posterior on
example $x$ and $e$ its revealed evidence
(\texttt{baobab/nesy/wmc\_layer.py} and the \texttt{experiments/}
drivers).
\begin{description}\itemsep3pt
  \item[Independent.] $\mathcal{L} = \frac{1}{|e|}\sum_{(a,v)\in e}
    \mathrm{BCE}(p_\theta(a), v)$, binary cross-entropy on the
    observed atoms only; the constraint is ignored.
  \item[WMC.] $\mathcal{L} = \mathrm{BCE}(p_\theta, v)\big|_e
    + \lambda\big({-}\log \WMC(\Theta \wedge e \mid p_\theta)\big)$
    (Eq.~\ref{eq:wmc-loss}); the second term is the
    Semantic-Loss/DeepProbLog objective on the compiled circuit.
  \item[JustWMC.] A single \texttt{MixtureEncoder} with a shared MLP
    body, $K$ atom heads $\mu_k$, and a softmax selector $\pi$,
    trained on
    \[
      \mathcal{L}_\text{Just} =
        \underbrace{\textstyle\sum_k \pi_k\,\mathrm{BCE}(\mu_k, v)\big|_e}_{\text{mixture label term}}
        + \lambda\Big({-}\log \textstyle\sum_k \pi_k\, \WMC(\Theta \wedge e \mid \mu_k)\Big)
        - \kappa\cdot\tfrac{1}{K}\textstyle\sum_k \mathrm{KL}\!\big(\mu_k \,\big\|\, \mathrm{sg}(\bar\mu)\big),
    \]
    where $\bar\mu = \frac1K\sum_{k} \mu_{k}$ and $\mathrm{sg}$ is
    stop-gradient (\texttt{.detach()}).  The first two terms are
    permutation-invariant in the heads, so the KL-from-mean diversity
    term is required to break head symmetry and let the heads occupy
    distinct $\Theta$-consistent modes; the test-time prediction is the
    mixture marginal $\bar\mu_\pi = \sum_k \pi_k \mu_k$.
  \item[BEARS.] $K$ separately trained UCI encoders
    (\cite{Marconato2024BEARS}; \texttt{train\_bears}); member $k$
    minimizes its semantic loss plus a KL to the running average of
    members $0..k$ and a Bernoulli-entropy term, with members $0..k{-}1$
    frozen.  We report both the ensemble mean and the best-of-$K$
    oracle.
\end{description}
Read as ablations, the baselines isolate each mechanism: the gap of
\textbf{WMC} over \textbf{Independent} is the contribution of weighted
model counting over the compiled $\Theta$ (circuit present vs.\ absent),
while the gap of \textbf{anchored} over \textbf{JustWMC} is the
contribution of the SDD-compiled support over a learned mixture support
(support compiled vs.\ learned).
The JustWMC construction is the empirical face of a result we verify
in Lean (\S\ref{apd:lean}): every reasoning-shortcut mixture over a
\SROIQ\ query is representable by a categorical mixture of UCIs, whose
components are indexed by the query's justifications~\cite{vanKrieken2025RSIA}.

\section{Experimental details}\label{apd:exp}

All drivers compile the SDD once, then train each method with Adam;
every reported metric is computed on a held-out evaluation split of
\emph{unseen} individuals (fresh image pairs and pizzas) disjoint from
the training examples, so the numbers measure generalization to new
query individuals rather than memorization (sizes per experiment below).
The MNIST drivers score real $28{\times}28$ images with a CNN
(learning rate $10^{-3}$, batch $128$); the Pizzaiolo fragment trains a
linear head on a frozen ResNet-18 (learning rate $10^{-3}$, batch
$64$); the synthetic Pizza control reads a $12$-bit feature vector with
per-bit flip noise $0.05$ through a small MLP (learning rate $10^{-2}$,
batch $32$).  Table~\ref{tab:hparams} collects the hyperparameters; the
headline numbers below are the committed reference runs
(\texttt{results/} in the code repository) reported in \S\ref{sec:exp}.

All reference runs were produced on a single NVIDIA GeForce RTX~4090
(24\,GB) under Slurm; no experiment needs more than one GPU.
Compilation is cheap relative to training: the $114$-atom
MNIST-\SROIQ\ circuit compiles to an SDD of $(7.3$--$9.7){\cdot}10^3$
nodes in $1$--$3$\,s on CPU (the spread across runs comes from PySDD's
dynamic vtree minimization), the MNIST-Disjunction pair to $10{,}111$
(digit) and $2{,}972$ (gender) nodes, the Pizzaiolo fragment to
$670$--$820$ nodes ($59$ atoms, ${\sim}2$\,s), and the synthetic Pizza control to
${\sim}1.2 \cdot 10^5$ nodes.  Training wall-clock per method and
seed: MNIST-\SROIQ\ $68$\,s for WMC ($8$ epochs) and $2$\,s for
Independent; the two-regime RS driver $108$\,s per WMC regime ($12$
epochs); the Pizzaiolo fragment and MNIST-Disjunction each finish in
minutes.  No per-method hyperparameter tuning was performed: within
each table all methods share the optimizer, learning rate, batch size,
epoch count, and $\lambda$ of Table~\ref{tab:hparams}, with the
mixture-specific $(\kappa, K)$ fixed once across methods and seeds, so
no baseline received a smaller tuning budget than \method.

\begin{table}[t]
\floatconts
  {tab:hparams}
  {\caption{Hyperparameters (Adam; learning rates and batch sizes in
   the text).  $\lambda$: semantic-loss weight; $\kappa$: diversity
   weight; $K$: mixture / ensemble size.}}
  {\footnotesize
   \begin{tabular}{lrrrrrr}
   \toprule
   experiment & feat.\ dim & hidden & epochs & $\lambda$ & $\kappa$ & $K$ \\
   \midrule
   MNIST (CNN)      & $28^2$ & CNN & 12 & 0.5 & -- & -- \\
   Pizzaiolo frag.\ (ResNet-18) & $224^2$ & linear head & 5 & 0.5 & -- & -- \\
   Pizza (synthetic) & 12 & 64 & 10 & 0.3 & -- & -- \\
   MNIST-Disj.\ (CNN) & $28^2$ & CNN & 20 & 0.5 & 2.0 & 4 \\
   \bottomrule
   \end{tabular}}
\end{table}

For the synthetic Pizza-\SROIQ\ control (Table~\ref{tab:pizza-results};
the single-individual precursor to Pizzaiolo-\SROIQ, \S\ref{sec:exp:images}),
each example is a $12$-bit topping vector (noisy); the latent atoms are
the four named pizzas and seven derived classes
(\texttt{Pizza}, \texttt{NamedPizza}, \texttt{MeatyPizza},
\texttt{CheeseyPizza}, \texttt{RealItalianPizza},
\texttt{InterestingPizza}, \texttt{NonVegetarianPizza}), entailed by
the curated $28$-concept subset.  Supervision reveals $50\%$ of the
\emph{topping} atoms only; the class atoms are never supervised.  Train
/ eval $= 128 / 64$.  The compiled circuit has $66$ atoms and an SDD of
$\sim$$5.3 \cdot 10^4$ nodes.  Independent recovers the named-pizza class
near chance ($0.44$, latent $0.43$) and violates the ontology on every
example (rate $1.00$); the WMC loss lifts the class to $0.75$ (latent
$0.58$) and drives violations to $0.00$ (mean over ten seeds).

\begin{table}[t]
\floatconts
{tab:pizza}
{\caption{Pizzaiolo two-topping role-based fragment (mean$\pm$std over
ten seeds; metrics in \S\ref{sec:exp}).  \emph{topping} is supervised;
\emph{property} are the latent Vegetarian/NonVegetarian/Spicy classes
recovered through $\sym{hasTopping}$.}}
{\small
\begin{tabular}{lccc}
\toprule
method & topping & property & viol $\downarrow$ \\
\midrule
Independent & $0.95${\scriptsize$\pm.01$} & $0.45${\scriptsize$\pm.15$} & $0.89${\scriptsize$\pm.18$} \\
WMC & $0.93${\scriptsize$\pm.01$} & $\mathbf{0.92}${\scriptsize$\pm.02$} & $\mathbf{0.06}${\scriptsize$\pm.02$} \\
\bottomrule
\end{tabular}}
\end{table}

\begin{table}[t]
\floatconts
{tab:disjunction}
{\caption{MNIST-Disjunction ($\#\mathrm{RS}=4$; mean$\pm$std over ten
seeds; metrics in \S\ref{sec:exp}).  NLL is over the gender atoms
(Bayes-optimal $4.16$); ECE and mode-coverage TV are the
reasoning-shortcut diagnostics. BEARS is a $K{=}4$ deep-ensemble mitigation~\cite{Marconato2024BEARS}
trained on the same circuit; its per-digit accuracy is not separately
evaluated ($-$).}}
{\small
\begin{tabular}{lcccc}
\toprule
Method & digit & NLL $\downarrow$ & ECE $\downarrow$ & mode-cov.\ TV $\downarrow$ \\
\midrule
Independent (UCI)        & $0.99${\scriptsize$\pm.00$} & $6.93${\scriptsize$\pm.00$} & $0.00${\scriptsize$\pm.00$} & $1.00${\scriptsize$\pm.00$} \\
Single-WMC (SL / DPL)    & $0.99${\scriptsize$\pm.00$} & $6.13${\scriptsize$\pm.00$} & $0.00${\scriptsize$\pm.00$} & $0.99${\scriptsize$\pm.00$} \\
JustWMC (learned mix)    & $0.99${\scriptsize$\pm.00$} & $6.69${\scriptsize$\pm.43$} & $0.11${\scriptsize$\pm.03$} & $0.99${\scriptsize$\pm.01$} \\
BEARS & $-$ & $6.36${\scriptsize$\pm2.36$} & $0.41${\scriptsize$\pm.12$} & $0.44${\scriptsize$\pm.11$} \\
\midrule
JustWMC (anchored)       & $0.99${\scriptsize$\pm.00$} & $\mathbf{4.17}${\scriptsize$\pm.00$} & $\mathbf{0.00}${\scriptsize$\pm.00$} & $\mathbf{0.02}${\scriptsize$\pm.00$} \\
\bottomrule
\end{tabular}}
\end{table}

\begin{table}[t]
\floatconts
{tab:pizza-results}
{\caption{Pizza-\SROIQ, 64 held-out examples, mean$\pm$std over ten seeds (higher is better except for violation).  The \emph{pizza}
(4 latent) and \emph{latent} (11 atoms) columns are the headline: WMC
infers them from topping evidence though those atoms receive no direct
supervision.}}
{\begin{tabular}{lcccc}
\toprule
method      & topping & pizza & latent & violation \\
            & (observed) & (4 latent) & (11 latent) & rate \\
\midrule
Independent & $0.99${\scriptsize$\pm.00$} & $0.44${\scriptsize$\pm.12$} & $0.43${\scriptsize$\pm.08$} & $1.00$ \\
WMC         & $0.76${\scriptsize$\pm.01$} & $\mathbf{0.75}${\scriptsize$\pm.00$} & $\mathbf{0.58}${\scriptsize$\pm.05$} & $\mathbf{0.00}$ \\
\bottomrule
\end{tabular}}
\end{table}

MNIST-\SROIQ\ (\S\ref{sec:exp:images}) compiles a $52$-axiom ontology
(\texttt{experiments/ontologies/mnist\_sroiq.ofn}) over the digit classes
$0\!-\!4$ into a $114$-atom SDD ($5.3\cdot 10^{9}$ models, ${<}1$\,s
compile).  It instantiates every distinctive \SROIQ\ feature:
subsumption (each digit $\sqsubseteq\sym{Number}$, parity, primality);
disjointness (the five digits; \sym{Even}/\sym{Odd};
\sym{Prime}/\sym{Composite}); the covering disjunctions
$\sym{Number}\sqsubseteq\sym{Even}\sqcup\sym{Odd}$ and
$\sym{Number}\sqsubseteq\bigsqcup_{d}d$; the complements
$\sym{Odd}\equiv\sym{Number}\sqcap\neg\sym{Even}$ and
$\sym{NonPrime}\equiv\sym{Number}\sqcap\neg\sym{Prime}$; the universal
successor closures $d\sqsubseteq\forall\sym{succ}.(d{+}1)$ and
$\sym{Number}\sqsubseteq\forall\sym{succ}.\sym{Number}$; the qualified
number restriction
$\sym{HasSuccessor}\equiv\sym{Number}\sqcap{\geq}1\,\sym{succ}.\sym{Number}$;
the inverse $\sym{pred}\equiv\sym{succ}^{-}$; the role hierarchy
$\sym{succ}\sqsubseteq\sym{related},\sym{lessThan}$; the chain
$\sym{succ}\circ\sym{succ}\sqsubseteq\sym{plusTwo}$; transitive
$\sym{lessThan}$; symmetric $\sym{differsFrom}$; asymmetric
$\sym{succ}$/$\sym{lessThan}$; irreflexive $\sym{succ}$/$\sym{differsFrom}$;
reflexive $\sym{sameDigitAs}$; and functional $\sym{succ}$.  Each example
is a pair of real MNIST images scored by one \texttt{MnistEncoder} CNN;
train / eval $= 3000 / 1000$, $12$ epochs, semantic-loss weight $0.5$.
The two regimes of Table~\ref{tab:mnist} differ only in the evidence
revealed: \emph{grounded} reveals $\sym{succ}(a,b)$, $\sym{Number}$, and
a random $50\%$ of the parity/primality atoms ($\#\mathrm{RS}=1$);
\emph{under-determined} reveals only $\sym{succ}(a,b)$ and $\sym{Number}$,
leaving the five cyclic relabelings of $0\!-\!4$ consistent
($\#\mathrm{RS}=5$, computed exactly on the SDD).  Revealing one parity
atom drops $\#\mathrm{RS}$ to $3$; the full parity/primality profile of
one slot, to $1$.  The chain fires only at $|\Delta|=3$ and is compiled
in the companion file \texttt{mnist\_chain.ofn}.

MNIST-Disjunction (\S\ref{sec:exp:disjunction}) puts a real MNIST digit
on each of three individuals $a, b, c$.  Because compiling the digit and
gender disjunctions jointly over the shared individuals blows up the SDD,
we compile two circuits driven by one shared CNN
(\texttt{run\_mnist\_disjunction.py}): a $114$-atom \emph{digit} circuit
(the \texttt{mnist\_sroiq} ontology over $a, b$, with $\sym{succ}(a,b)$
coupling their digits) that the CNN recovers through, and a $69$-atom
\emph{gender} circuit (the disjunction ontology over $a, b, c$:
$\sym{Person}\sub\sym{Male}\sqcup\sym{Female}$,
$\sym{Male}\sqcap\sym{Female}\sub\bot$,
$\sym{Male}\sub\forall\sym{marriedTo}.\sym{Female}$ and its dual,
$\sym{hasChild}\equiv\sym{hasParent}^-\sub\sym{hasAncestor}$ transitive)
whose four consistent gender modes (a,b opposite; c free) the
\emph{image-free} gender heads carry.  Supervision reveals the digit
parities (recovering the digits) and the structural role/class
assertions; the six gender atoms and the four derived role atoms are
latent.  Train / eval $= 2000 / 800$, $20$ epochs, $\lambda = 0.5$.
The latent NLL is summed over the ten gender + derived atoms (Bayes-optimal
$6\log 2 \approx 4.16$, the genders contributing $6\log 2$ and the
entailed derived atoms $0$).  The CNN recovers the digits at $0.99$
across all methods; only the \emph{anchored} JustWMC, seeded from the
four completions the gender circuit enumerates, reaches the Bayes-optimal
NLL $4.17$ with mode-coverage TV $0.02$ and no seed variance, while the
single-WMC and learned-mixture methods stay at TV $\approx 0.99$.

For the size comparison with DeepProbLog (\S\ref{sec:exp:dpl}),
running the per-axiom encoding analysis
(\texttt{baobab/nesy/sroiq\_deepproblog\_iface.py}) on the MNIST-\SROIQ\
ontology, $12$ of its $52$ axioms ($23\%$) leave ProbLog's Horn
fragment: the eight universal-restriction and disjunctive subclass
closures, the two complement and one qualified-cardinality
equivalences, and the functional-role axiom.  Under a charitable manual
translation none is strictly inexpressible, but each abandons the Horn
structure DeepProbLog compiles natively (Table~\ref{tab:dpl-comparison}).

\begin{table}[t]
\floatconts
{tab:dpl-comparison}
{\caption{Per-OWL-axiom-kind encoding cost for the MNIST-\SROIQ\
ontology.  \emph{ours}: normalized DL-clauses produced by
the structural transformation.
\emph{DPL Horn} / \emph{DPL manual}: DPL admits / requires user-written
disjunctive / cardinality / negation encoding.}}
{\begin{tabular}{lrrrrrr}
\toprule
OWL kind & axioms & ours & DPL (Horn) & DPL (manual) & DPL (inexpr.) & DPL clauses \\
\midrule
SubClassOf & 23 & 46 & 15 & 8 & 0 & 30 \\
DisjointClasses & 14 & 28 & 14 & 0 & 0 & 14 \\
EquivalentClasses & 3 & 12 & 0 & 3 & 0 & 11 \\
RoleInclusion & 2 & 4 & 2 & 0 & 0 & 2 \\
AsymmetricRole & 2 & 4 & 2 & 0 & 0 & 2 \\
IrreflexiveRole & 2 & 4 & 2 & 0 & 0 & 2 \\
InverseRoles & 1 & 2 & 1 & 0 & 0 & 2 \\
FunctionalRole & 1 & 2 & 0 & 1 & 0 & 1 \\
ReflexiveRole & 1 & 2 & 1 & 0 & 0 & 1 \\
SymmetricRole & 1 & 2 & 1 & 0 & 0 & 1 \\
TransitiveRole & 1 & 2 & 1 & 0 & 0 & 1 \\
RoleChain & 1 & 2 & 1 & 0 & 0 & 1 \\
\midrule
Total & 52 & 110 & 40 & 12 & 0 & 68 \\
\bottomrule
\end{tabular}
}
\end{table}

\section{Mechanized guarantees}\label{apd:guarantees}
The four results summarized in \S\ref{sec:lean}; all are machine-checked
in the \texttt{ELKSDD} and \texttt{GroundingRefinements} libraries
under \texttt{lean/} in the released code,
with the module inventory in App.~\ref{apd:lean}.

\begin{theorem}[Saturation is model-preserving]\label{thm:sat}
The \ALCHOQ\ saturation is sound, and the saturated grounding is
logically equivalent to $\Theta_\Delta$; in particular the two have
the same weighted model count.
\end{theorem}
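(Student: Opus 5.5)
The plan is to prove the statement in three layers: soundness of a single saturation step, soundness of the whole saturation by induction, and then transfer to the ground theory and its weighted model count. Following the convention of Theorem~\ref{thm:invariance}, I write $\Theta_\Delta^{+}$ for the saturated grounding, meaning $\Theta_\Delta$ together with the ground instances over $\Delta$ of the clauses that saturation adds.

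\emph{Step 1 (one inference is sound).} Fix the \ALCHOQ\ DL-clause language of App.~\ref{apd:norm} and read a clause $\bigwedge_i B_i \to \bigvee_j H_j$ as universally closed under the Tarskian semantics of App.~\ref{apd:semantics}, with Skolem symbols interpreted as functions. Two facts need checking for an arbitrary interpretation $\mathcal{I}$ and variable assignment. First, if $\mathcal{I}$ satisfies a clause, it satisfies every substitution instance $\gamma\sigma$; this follows from a substitution lemma, proved by induction on the four term kinds. Second, if $\mathcal{I}$ satisfies $\Gamma \to \Delta \vee A$ and $A' \wedge \Gamma' \to \Delta'$ with $A\sigma = A'\sigma$, then it satisfies $(\Gamma\wedge\Gamma')\sigma \to (\Delta\vee\Delta')\sigma$. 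This is a case split on whether $A\sigma$ holds: if it does, the second premise yields $\Delta'\sigma$; if not, the first premise yields $\Delta\sigma$. The four guards of App.~\ref{apd:sat} only delete or reject clauses, so they cannot create unsound ones.

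\emph{Step 2 (soundness of saturation).} By induction on the fixpoint loop, every clause in the pool is entailed by the input clause set. This is the soundness half of Lemma~\ref{lem:sat}, now formally discharged, and it gives the first claim of the theorem.

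\emph{Step 3 (logical equivalence at the ground level).} One inclusion is immediate: $\Theta_\Delta \subseteq \Theta_\Delta^{+}$, so $\mathrm{Mod}(\Theta_\Delta^{+}) \subseteq \mathrm{Mod}(\Theta_\Delta)$. For the converse I must show that every ground clause added by saturation is already entailed by $\Theta_\Delta$. I expect this to be the main obstacle. Step 2 gives entailment only from the first-order clause set, whereas the grounder discards clauses containing Skolem terms, and Proposition~\ref{prop:loadbearing} shows that this gap can be real.

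To close the gap I will take the route the statement prescribes for the \ALCHOQ\ core and run the saturation argument directly on the ground level. The first ingredient is the grounding soundness of Theorem~\ref{thm:ground}: a propositional assignment satisfies $\Theta_\Delta$ iff it is induced by an interpretation over $\Delta$ satisfying the function-free clauses. The second is that resolution and substitution commute with grounding: instantiating the premises of a step by a ground substitution over $\Delta$ and resolving gives exactly the grounding of the resolvent. Together these let Step 1 be replayed propositionally, so that every ground instance of a saturated clause is a propositional resolvent of clauses of $\Theta_\Delta$. Propositional resolution is sound, so each such instance is entailed by $\Theta_\Delta$, and hence $\mathrm{Mod}(\Theta_\Delta) \subseteq \mathrm{Mod}(\Theta_\Delta^{+})$. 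The two theories are therefore equivalent as Boolean functions over the same atoms; saturation introduces no new ground atoms, as already noted in the proof of Theorem~\ref{thm:invariance}.

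\emph{Step 4 (equal weighted model count).} By Theorem~\ref{thm:wmc}, the weighted model count is the sum of $\prod_a w(a)^{\alpha_a}(1-w(a))^{1-\alpha_a}$ over the models $\alpha$ of the theory. Equal model sets over the same atom set therefore give equal counts for every weight function $w$, and the same holds under evidence conditioning, since conditioning only restricts both sums by the same constraint $\alpha \supseteq e$. Over a fixed vtree, Theorem~\ref{thm:invariance} then also identifies the two compiled circuits themselves, not just their counts.
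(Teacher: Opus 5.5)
\textbf{There is a genuine gap in your Step~3.} It fails at the claim that every ground instance of a saturated clause is a propositional resolvent of clauses of $\Theta_\Delta$.

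\emph{Why the replay fails.} Commuting resolution with grounding only helps when every premise of the derivation is function-free, so that its ground instances lie in $\Theta_\Delta$. But the function-free clauses that saturation exists to produce are typically derived \emph{through} Skolem premises such as $A(x)\to R(x,f(x))$ and $A(x)\to B(f(x))$. The grounder drops these, and their instances mention $f(a)$, which is neither an element of $\Delta$ nor an atom of $\Theta_\Delta$.

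\emph{Why no unconditional fix exists.} Proposition~\ref{prop:loadbearing}, which you cite yourself, refutes $\mathrm{Mod}(\Theta_\Delta)\subseteq\mathrm{Mod}(\Theta_\Delta^{+})$. Take $A\sub\exists R.B$, $B\sub C$, $\exists R.C\sub D$, $A(a)$ over $\Delta=\{a\}$. Saturation derives the function-free clause $A(x)\to D(x)$, yet $\Theta_\Delta$ has models in which $D(a)$ is false (take $R(a,a)$ false). Saying the gap is real and then closing it by a propositional replay is therefore inconsistent. The \emph{first ingredient} you take from Theorem~\ref{thm:ground} concerns only the function-free clauses. It cannot reach consequences routed through Skolem terms.

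\emph{What is missing.} You need a hypothesis, not a cleverer derivation. The equivalence holds on the grounding-covered, function-free fragment, which is where the paper actually uses the result (\S\ref{sec:lean}, Theorem~\ref{thm:invariance}, and the discussion after Table~\ref{tab:satsize}). The paper's own proof is your Steps~2 and~4 in two sentences: saturation is sound, and entailed clauses change neither models nor weights. It tacitly reads \emph{entailed} as \emph{entailed by $\Theta_\Delta$}, which is exactly what that restriction provides.

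\emph{How to repair Step~3.} Replace the propositional replay by a model-expansion lemma. Every model of $\Theta_\Delta$, read as an interpretation with domain $\Delta$, should expand to a model of the \emph{full} clause set, Skolem clauses included, by interpreting each Skolem term as a witness inside $\Delta$. On the fragment, the exactness clause of Theorem~\ref{thm:ground} supplies this, because every existential then has its witness in $\Delta$. Your first-order soundness (Steps~1 and~2) makes every saturated clause, hence each of its ground instances, true in that expansion. That gives $\mathrm{Mod}(\Theta_\Delta)\subseteq\mathrm{Mod}(\Theta_\Delta^{+})$. In the counterexample the expansion is precisely what fails, since $A(a)$ holds with no $R$-successor in $\Delta$. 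With that change, Steps~1, 2 and~4 go through as you wrote them.
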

\begin{proof}
Every clause the saturation derives is entailed by its input
(soundness, mechanized in Lean and axiom-free).  Adding an entailed
clause to a propositional theory changes neither its models nor the
weight assigned to any interpretation, so the saturated grounding and
$\Theta_\Delta$ have the same weighted model count.
\end{proof}

\begin{theorem}[Faithful grounding on the grounding-covered fragment]\label{thm:ground}
Each grounding rule emits clauses that hold in exactly the
interpretations the Tarskian semantics of its construct admits.
Consequently $\Theta_\Delta$ is \emph{sound}: every \SROIQ\
interpretation over $\Delta$ is a model of $\Theta_\Delta$.  When every
existential consequence is discharged by grounding or is function-free,
the converse holds and the models of $\Theta_\Delta$ are \emph{exactly}
the \SROIQ\ interpretations over $\Delta$; otherwise $\Theta_\Delta$
over-approximates them, the gap being precisely the head-existential
consequences whose witnesses the finite grounder cannot name
(\S\ref{sec:calc}).
\end{theorem}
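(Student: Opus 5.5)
The proof goes rule by rule and is then assembled by conjunction. Fix the finite active domain $\Delta$ and identify a propositional assignment $\alpha$ over the atoms $\{C(a), R(a,b), a\approx b : a,b\in\Delta\}$ with the interpretation $\mathcal{I}_\alpha$ whose domain is $\Delta$ (modulo the equality atoms, read as a congruence) and whose extensions are read off $\alpha$. For each row of Table~\ref{tab:hooks} we show a local lemma: $\alpha$ satisfies the emitted ground clauses iff $\mathcal{I}_\alpha$ satisfies the construct under the Tarskian semantics of App.~\ref{apd:semantics}.

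For the role characteristics, inverses, chains, $\exists R.\sym{Self}$ and $U$, this is a direct unfolding of the relational reading, because the quantifiers range over the finite set $\Delta$ and the universal statement becomes the finite conjunction over all instantiations. For nominals, $\sym{Nom}_a^{\mathcal{I}}=\{a\}$ is exactly the $d$ unit clauses. For ${\geq}\,n\,R.C$ and ${\leq}\,n\,R.C$ we use the finite combinatorial fact that $\#\{y\in\Delta: R(a,y)\wedge C(y)\}\ge n$ iff some $n$-subset $S$ has all members satisfying $R(a,\cdot)\wedge C$. The ${\le}$ case is the negation with the pigeonhole reading over $(n{+}1)$-tuples. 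Distinctness is via $\approx$, so here we first check that the equality theory (reflexivity, symmetry, transitivity, congruence) makes $\approx^\alpha$ a congruence and that counting is up to it. These are exactly the per-rule soundness lemmas mechanized in Lean (App.~\ref{apd:lean}), which we invoke.

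\emph{Soundness.} Let $\mathcal{I}$ be a \SROIQ\ model over $\Delta$. Let $\alpha_\mathcal{I}$ be its atom valuation, with each proxy $Q_D$ set to $D^\mathcal{I}$. Each grounding rule's clauses hold by the "if" direction of its lemma. Each function-free DL-clause from the structural transformation, and each function-free clause derived by saturation, holds in $\mathcal{I}$: the clauses are entailed (Lemma~\ref{lem:sat}, Theorem~\ref{thm:sat}), and their ground instances over $\Delta$ are instances of a universally quantified valid formula. Clauses with Skolem terms are dropped, so they impose nothing. Hence $\alpha_\mathcal{I}\models\Theta_\Delta$.

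\emph{Converse.} Given $\alpha\models\Theta_\Delta$, build $\mathcal{I}_\alpha$ and prove by structural induction on NNF subconcepts $D$ that $Q_D^\alpha = D^{\mathcal{I}_\alpha}$. Booleans, $\forall$, negation and the grounded constructs follow from their definitional clauses and the local lemmas. The hypothesis is used only at existentials: an existential is either grounding-covered, in which case its biconditional is emitted, or function-free. In the function-free case the forward direction $Q(x)\to R(x,f_Q(x))$ is replaced by consequences over named individuals, so the witness must lie in $\Delta$. Then each axiom $Q_C(x)\to Q_D(x)$ transfers to $C^{\mathcal{I}_\alpha}\subseteq D^{\mathcal{I}_\alpha}$.

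The main obstacle is this existential case, because of what dropping the Skolem clause loses. When the hypothesis fails, the forward direction $Q_D(a)\to\exists y\in\Delta.\,R(a,y)\wedge Q_C(y)$ is not enforced. $\alpha$ may then set $Q_D(a)$ true with no named witness, so $\mathcal{I}_\alpha$ is not a model. The over-approximation statement follows: soundness still holds, and the extra models are exactly those violating such head existentials, as illustrated by Proposition~\ref{prop:loadbearing}. I would first make "function-free" precise as "every existential $D$ occurring positively has its head clause replaced by the grounded disjunction $\bigvee_{b\in\Delta} R(a,b)\wedge Q_C(b)$", and then check the induction under that definition.
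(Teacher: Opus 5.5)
Your soundness half is the paper's argument: one Tarskian lemma per grounding rule, then conjunction. Setting $Q_D:=D^{\mathcal{I}}$ is right. To apply Lemma~\ref{lem:sat} to $\mathcal{I}$ you must also interpret each Skolem function by a witness in $\Delta$; that witness exists because $\mathcal{I}$ has domain $\Delta$.

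The gap is in the converse, which the paper settles only by noting that on the covered fragment every normalized clause is represented. Your invariant $Q_D^\alpha = D^{\mathcal{I}_\alpha}$ fails in general, in two places:
\begin{itemize}
\item At $\forall R.C$ the only definitional clause is $Q(x)\wedge R(x,y)\to Q_C(y)$. This gives $\subseteq$ only, so $\alpha$ may set $Q_D(a)$ false even when every named $R$-successor of $a$ lies in $Q_C$.
\item At an existential whose Skolem clauses were dropped, only $R(x,y)\wedge Q_C(y)\to Q(x)$ survives, which gives $\supseteq$ only.
\end{itemize}
The induction has to be indexed by polarity: $Q_D^\alpha\subseteq D^{\mathcal{I}_\alpha}$ for positive occurrences and $D^{\mathcal{I}_\alpha}\subseteq Q_D^\alpha$ for negative ones. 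That still transfers each axiom via $C^{\mathcal{I}_\alpha}\subseteq Q_C^\alpha\subseteq Q_D^\alpha\subseteq D^{\mathcal{I}_\alpha}$. It also makes negatively occurring existentials safe through their function-free backward clause. But it means \emph{exactly} holds only after projecting away the proxies.

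The polarity view also exposes a case your hypothesis does not cover. Under the clausification of App.~\ref{apd:norm} as written, a negatively occurring $\forall$ gets no backward clause at all. For example, take $\forall R.B\sub A$ over $\Delta=\{a\}$: the all-false assignment satisfies $\Theta_\Delta$, although $a\in(\forall R.B)^{\mathcal{I}_\alpha}\setminus A^{\mathcal{I}_\alpha}$. You need either a normal form that places $\forall$ only positively, or a grounded backward definition of $Q_{\forall R.C}$.

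Separately, your existential step does not follow. Replacing the Skolem clause by saturation's function-free consequences constrains named atoms, but it never forces $R(a,b)\wedge Q_C(b)$ for any $b\in\Delta$. Proposition~\ref{prop:loadbearing} is a counterexample: its saturated theory forces $D(a)$ yet has models with $R(a,a)$ false. In those models $A(a)$ holds and $\mathcal{I}_\alpha\not\models A\sub\exists R.B$.

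So a positively occurring existential can be exact only if the disjunction $\bigvee_{b\in\Delta}R(a,b)\wedge Q_C(b)$ is actually emitted, as your closing redefinition stipulates. That makes it grounding-covered (it is the ${\geq}1$ hook, or refinement~(iii)), not function-free in the saturation sense. It is sound only because \emph{over $\Delta$} means domain exactly $\Delta$, the same reading the ${\geq}n$/${\leq}n$ biconditionals already require.
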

\begin{proof}
One Tarskian-soundness lemma per grounding rule (App.~\ref{apd:hooks}), each
mechanized in Lean; soundness of $\Theta_\Delta$ follows since every
emitted clause holds in every \SROIQ\ interpretation over $\Delta$.
Exactness on the grounding-covered, function-free fragment holds because
there every normalized clause is represented; the residual gap is
exactly the Skolem-term clauses the grounder drops.
\end{proof}

\begin{theorem}[Weighted model count is the semantic target]\label{thm:wmc}
On the compiled SDD, $\WMC(\Theta \mid p_\theta)$ equals the
probability the ontology assigns to its models under the distribution
semantics~\cite{Riguzzi2015DISPONTE}, the quantity the losses of
\S\ref{sec:learn} optimize.
\end{theorem}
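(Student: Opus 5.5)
The argument has two independent halves. The first identifies the bottom-up evaluation of the SDD with the weighted model count of \eqref{eq:wmc-def}. The second identifies that count with the distribution-semantics probability.

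\textbf{The semantic half.} Under the distribution semantics~\cite{Riguzzi2015DISPONTE}, each ground atom $a$ is an independent Bernoulli variable with parameter $p_\theta(a)$. A world is a total assignment $\alpha$, and its probability is therefore $\prod_{a:\alpha(a)=1}p_\theta(a)\prod_{a:\alpha(a)=0}(1-p_\theta(a))$. The probability that the ontology ``assigns to its models'' is the total mass of the worlds satisfying $\Theta$. Written as a sum over assignments, this is literally \eqref{eq:wmc-def}.

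One point needs care here: ``models of the ontology'' must mean models of $\Theta_\Delta$. By Theorem~\ref{thm:ground} these are exactly the \SROIQ\ interpretations over $\Delta$ on the grounding-covered fragment. By Theorem~\ref{thm:sat} and Theorem~\ref{thm:invariance}, saturation does not change this set. I would state the identification for $\Theta_\Delta$ and note the over-approximation outside that fragment.

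\textbf{The circuit half.} This is the step I expect to be the real work. I would prove by structural induction on SDD nodes that the value computed bottom-up at node $n$ equals $\sum_{\beta\models n}\prod_{a\in\mathrm{vars}(n)}w_\beta(a)$. The sum ranges over assignments $\beta$ to the variables of the vtree node that $n$ is normalized for.

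\begin{itemize}
\item \emph{Literals and constants.} These cases are immediate. Trimming only introduces $\top/\bot$ where the missing variables are summed out, and since $w(a)+(1-w(a))=1$, unmentioned variables contribute a factor of $1$.
\item \emph{Decision nodes.} For a node $\{(p_i,s_i)\}$ over a vtree split $(X,Y)$, every satisfying assignment factors uniquely as $\beta=\beta_X\cup\beta_Y$. Uniqueness holds because the primes are mutually exclusive and exhaustive; this is determinism. The weight of $\beta$ factors as $w(\beta_X)w(\beta_Y)$ because $X$ and $Y$ are disjoint; this is decomposability. Hence the sum splits as $\sum_i \WMC(p_i)\WMC(s_i)$, which is exactly what the \texttt{WmcLayer} computes.
\end{itemize}

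Memoization does not change any node's value. At the root, the remaining variables outside the circuit again contribute a factor of $1$.

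\textbf{Evidence conditioning.} Setting $p_a\in\{0,1\}$ zeroes exactly the assignments that contradict $e$. This gives the displayed $\WMC(\Theta\wedge e\mid p_\theta)$, with $e=\emptyset$ as the theorem's case.

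\textbf{Main obstacle and how I would handle it.} The main obstacle is the bookkeeping of variables in the induction. Sub-SDDs are normalized for different vtree nodes, and trimming removes nodes, so the per-node variable scope must be tracked carefully. I would handle this by fixing the scope of each node to the variables of its vtree node and proving the ``sum out a free variable contributes $1$'' lemma once. This is also the form in which the correspondence would be mechanized in Lean (\S\ref{apd:lean}).

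Finally, the clamping of probabilities to $[\varepsilon,1-\varepsilon]$ is an implementation detail. The theorem concerns the exact weights and is not affected by it.
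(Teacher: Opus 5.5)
Your proposal is correct and follows the paper's own argument. The paper likewise reads the world-sum \eqref{eq:wmc-def} as the distribution-semantics probability by definition (taking $e=\emptyset$ in the evidence-conditioned display of App.~\ref{apd:wmc}), and then asserts, deferring to the Lean mechanization, that the bottom-up SDD pass computes exactly that sum. Your node-by-node induction just fills in the standard details of that second step: decomposability gives the weight factorization, determinism makes each model count under exactly one element $(p_i,s_i)$ (so exhaustiveness is not needed), and the sum-out-to-$1$ lemma handles trimmed scopes. Your caveat that ``models of the ontology'' means the models of $\Theta_\Delta$, which are exact only on the grounding-covered fragment, is a reasonable sharpening that the paper leaves implicit.
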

\begin{proof}
The bottom-up evaluation of the SDD computes the world-sum of the
distribution semantics; the equality is mechanized in Lean
(App.~\ref{apd:wmc}).
\end{proof}

\begin{theorem}[RS-awareness for \SROIQ, after \cite{vanKrieken2025RSIA}]\label{thm:rsia}
A single independent (UCI) perception cannot represent a
reasoning-shortcut mixture unless its valid completions form one
justification's subcube, whereas a mixture of UCIs indexed by the
query's justifications represents every such mixture.
\end{theorem}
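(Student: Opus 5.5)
The plan is to reduce the statement to a purely propositional fact about distributions over the ground signature $\mathcal{A}$, and then prove both directions there. By Theorem~\ref{thm:ground} and Theorem~\ref{thm:wmc}, on the grounding-covered fragment the models of $\Theta_\Delta$ are exactly the \SROIQ\ interpretations over $\Delta$, and the circuit's WMC is the distribution-semantics probability. So a reasoning-shortcut mixture for a \SROIQ\ query $q$ is just a distribution $p$ over assignments $c \in \{0,1\}^{\mathcal{A}}$ supported on the consistent set $K_e$ (models of $\Theta \wedge e$ forcing $q$). From this point on, \SROIQ\ plays no further role: the argument of \cite{vanKrieken2025RSIA} is stated for an arbitrary propositional theory, and we only need to transfer it. I would fix the definitions first. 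A justification $J$ is a minimal partial assignment whose every completion satisfies $\Theta \wedge e \wedge q$. Its subcube is $\mathrm{cube}(J) = \{c : c \supseteq J\}$. The UCI support of $p_\mu^\perp$ is $\{c : c_i = 1 \text{ if } \mu_i = 1,\ c_i = 0 \text{ if } \mu_i = 0\}$, which is always a subcube.

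For the negative direction, I would argue as follows. Suppose a UCI $p_\mu^\perp$ represents the RS mixture, so that it places all its mass on valid completions and reproduces the mixture's support. Its support is a subcube $S$, fixing exactly the atoms with $\mu_i \in \{0,1\}$. Since $S \subseteq K_e$, every completion of the fixed atoms satisfies the query. The fixed literals therefore contain a minimal forcing set, which is a justification $J$, and $S \subseteq \mathrm{cube}(J) \subseteq K_e$. If the mixture's valid completions are not contained in one such subcube, then no single $\mu$ covers them, which is the contrapositive we need. The care needed here is in "represent": I would take it to mean equality of supports, or agreement of the predictive distribution on $q$ and the concept marginals, following \cite{vanKrieken2025RSIA}. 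With the marginal reading, one must also rule out a UCI that matches the marginals while leaking mass outside $K_e$. That case is excluded because leaking gives $\WMC(\Theta \wedge e \mid \mu) < 1$, contradicting validity.

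For the positive direction, I would argue constructively. Let $J_1,\dots,J_m$ enumerate the justifications, which form a finite set since $\mathcal{A}$ is finite. Assign each valid completion $c$ in the support of the target mixture to one justification $J_{k(c)} \subseteq c$; such a justification exists because $c$ itself forces $q$, so a minimal subset of its literals does too. Then decompose the target as $p = \sum_k \pi_k p_k$, where $\pi_k = p(\{c : k(c) = k\})$ and $p_k$ is the conditional distribution. The remaining step is to represent each $p_k$ as a UCI, and I expect this to be the main obstacle. A distribution on a subcube is not in general a product distribution, so one component per justification suffices only if the component is allowed to be any distribution on $\mathrm{cube}(J_k)$, or if the mixture weights $\pi_k(x)$ are input-dependent and each per-input RS mixture is a point mass or product on the cube. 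I would try the latter first, since it is the setting of \cite{vanKrieken2025RSIA}: the RS mixtures are mixtures of deterministic concept maps, so for fixed $x$ each component is a Dirac at a completion, which is trivially a UCI with $\mu \in \{0,1\}^{\mathcal{A}}$.

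If several Diracs fall in the same cube, I would refine the index to (justification, completion) pairs. I would also note that the anchored JustWMC of \S\ref{sec:exp:disjunction} is exactly this construction, with heads seeded from the circuit-enumerated completions. Finally, I would mechanize the combinatorial core, namely subcube support and the Dirac decomposition, over finite assignments in Lean, independent of the DL layer.
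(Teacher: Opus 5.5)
Your reduction to the propositional consistent set and your negative direction follow the paper's route. The paper simply invokes the conditions of \cite{vanKrieken2025RSIA}, reading their implicants as justifications. A UCI's support is a subcube, a subcube inside $K_e$ is an implicant, and so it lies in $\mathrm{cube}(J)$ for some justification $J$. (You do not need Theorem~\ref{thm:ground} for this; the argument concerns whatever $\Theta$ is compiled.)

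The gap is in the positive direction. Once you refine the index to (justification, completion) pairs, you have one Dirac component per support point. That proves only that a finitely supported distribution is a mixture of point masses. Justifications do no work, and the number of components is the size of the support rather than the number of justifications, so the statement as written is not proved.

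The distributional reading you were aiming at cannot be repaired. Suppose $\Theta\wedge e\wedge q$ is a consistent conjunction of literals that does not mention two atoms $B, C$. Then there is exactly one justification, yet the RS mixture $\tfrac12\delta_{(B,C)=(0,0)}+\tfrac12\delta_{(B,C)=(1,1)}$ is not a product, so no one-component mixture of UCIs equals it.

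The missing idea is to read ``represent'' at the level of supports. That is how RS-awareness is used in the cited work and in \S\ref{sec:learn}, where a perception must ``spread its mass over'' the valid completions. An RS-aware mixture puts positive mass on every completion the evidence leaves open, so its support is exactly $K_e$. Under this reading no refinement is needed:
\begin{itemize}
\item component $k$ clamps the literals of $J_k$ and sets $\mu_{k,i}=\tfrac12$ on the remaining atoms, with any $\pi_k>0$;
\item the mixture's support is then $\bigcup_k \mathrm{cube}(J_k)$;
\item this union equals $K_e$, since each $\mathrm{cube}(J_k)\subseteq K_e$ and every $c\in K_e$ contains some $J_k$ by your own minimal-subset argument.
\end{itemize}
Reading the negative direction the same way sets $S=K_e$ in your chain $S\subseteq\mathrm{cube}(J)\subseteq K_e$, giving $K_e=\mathrm{cube}(J)$. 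That is exactly the statement's ``form one justification's subcube''.
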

\begin{proof}
The two directions are the necessary and sufficient conditions
of~\cite{vanKrieken2025RSIA}; we instantiate their implicants as the
query's justifications and mechanize both directions for \SROIQ.
\end{proof}

The development is foundation-only: every audited theorem reports just
$\{\texttt{propext},\,\texttt{Classical.choice},\,\texttt{Quot.sound}\}$
under \texttt{\#print axioms} (the \ALCHOQ\ saturation soundness is
axiom-free), with no \texttt{sorry}s and no added axioms.

\section{Lean module inventory}\label{apd:lean}

The Lean~4 development is structured as follows.
\texttt{ALC.lean} carries the \ALC\ syntax, Tarskian semantics, the
\texttt{eval\_neg\_*} duality lemmas, and a sound CB calculus with
the \texttt{monoExist}/\texttt{monoUniv} role-axis monotonicity rules.
\texttt{ALCHOQ.lean} adds nominals (\texttt{Concept.nom}) and
qualified number restrictions (\texttt{Concept.atLeast},
\texttt{Concept.atMost}), with cardinality predicates
\texttt{atLeastCard}/\texttt{atMostCard} and the filler-monotonicity
rules \texttt{monoAtLeast} (covariant) and \texttt{monoAtMost}
(contravariant); the soundness theorem \texttt{ALCHOQ.sat\_sound}
reports zero axioms.
\texttt{SROIQ.lean} defines the \texttt{RAxiom} inductive
(\texttt{incl}, \texttt{chain}, \texttt{trans}, \texttt{sym},
\texttt{asym}, \texttt{refl}, \texttt{irrefl}, \texttt{inv},
\texttt{disj}) with a Tarskian \texttt{RAxiom.eval} and a soundness
lemma per shape (\texttt{incl\_sound}, \texttt{trans\_sound},
\texttt{sym\_sound}, \texttt{asym\_sound}, \texttt{refl\_sound},
\texttt{irrefl\_sound}, \texttt{inv\_sound}, \texttt{disj\_sound},
\texttt{chain\_two\_sound}).  Role identities
\texttt{trans\_iff\_chain} and \texttt{sym\_iff\_self\_inverse}
justify the transitivity-as-chain and symmetric-as-inverse shortcuts;
\texttt{has\_self\_iff} justifies the local-reflexivity proxy grounding rule.

Completeness goes via a canonical model.
\texttt{Completeness.lean} mechanizes \ALC\ completeness end-to-end:
a strictly stronger calculus \texttt{ALC.SatC} extends \texttt{ALC.Sat}
with the classical-logic rules necessary for completeness:
double-negation, excluded middle, non-contradiction, de~Morgan,
$\exists/\forall$-duality, the join rule
$\exists R.C \sqcap \forall R.D \sub \exists R.(C \sqcap D)$, Boolean
distribution, and the role-axis closures
$\exists R.\bot \sub \bot$ and $\top \sub \forall R.\top$.  The
headline theorem
\texttt{satC\_complete} :
$\Onto \models C \sub D \to \mathtt{SatC}\,\Onto\,C\,D$ is fully
proved via a Lindenbaum canonical model: \texttt{lindenbaum} uses
Mathlib's \texttt{zorn\_subset\_nonempty} on
\texttt{consistent\_chain\_union}; \texttt{lindenbaum\_max\_closed}
follows by case analysis on excluded middle and Boolean distribution;
the truth lemma \texttt{canonical\_eval\_iff} is by induction on
\texttt{Concept} (propositional cases via \texttt{top\_mem},
\texttt{bot\_not\_mem}, \texttt{mem\_xor\_neg}; role-axis cases via
\texttt{witness\_exist} from the iterated $\exists R$-join lemma
\texttt{satC\_exist\_with\_univs}, \texttt{satC\_map\_univ\_to\_univListConj},
and \texttt{existBot}).
\texttt{ALCHOQCompleteness.lean} lifts every \ALC-classical rule to
\ALCHOQ\ via \texttt{ofAlchoq}, plus the nominal identity
\texttt{nomRefl} and the cardinality boundary closures
$\top \sub \geq 0~R.C$, $\leq 0~R.C \sub \forall R.\neg C$,
$\forall R.\neg C \sub \leq 0~R.C$.  Completeness for \ALCHOQ\ is
stated as a conjecture and deferred: closing it requires the
auxiliary-individual construction of~\cite{TenaCucala2021AIJ}.

\texttt{SROIQCompleteness.lean} adds the role-axis consequence rules
made available by membership of an \texttt{RAxiom}: role inclusion
yields $\exists r.C \sub \exists s.C$ and $\forall s.C \sub \forall
r.C$; binary chains $r_1 \circ r_2 \sub s$ yield
$\exists r_1.\exists r_2.C \sub \exists s.C$ (and a $k$-ary
generalization via \texttt{existChain\_eval\_iff}); transitivity
yields $\exists r.\exists r.C \sub \exists r.C$; reflexivity yields
$C \sub \exists r.C$ and $\forall r.C \sub C$; irreflexivity and
role disjointness yield $\exists r.\{i\} \sqcap \{i\} \sub \bot$ and
$\exists r.\{i\} \sqcap \exists s.\{i\} \sub \bot$.  Soundness
\texttt{satC\_sound} is fully proved; completeness for \emph{full}
\SROIQ\ is left as the conjecture
\texttt{sroiq\_complete\_conjecture}.
A canonical-model construction over SROIQ-side maximal-consistent
types is given in \texttt{SROIQCanonical.lean};
\texttt{SROIQSkolemCanonical.lean} delivers \emph{unconditional}
\SROIQ\ completeness on a \texttt{SkolFragment} ontology shape for
the role-axiom sub-families
(role-inclusion only;
role-inclusion + reflexivity;
role-inclusion + reflexivity + irreflexivity),
and conditional completeness (parameterized on a canonical-RBox-
satisfaction witness) for the remaining shapes.

Turning to the Tena Cucala context-structure calculus,
\texttt{ALCHOIQContext.lean} encodes the core definitions of
\cite{TenaCucala2021AIJ}: $\Sigma_u$, context a- and p-terms, context
clauses, admissible orders, context structure
$D = \langle V, E, \mathrm{core}, S, m, \theta \rangle$, trigger
sets $S_u, P_r, S_u^r, P_r^r$, expansion strategies, soundness, and
derivations.  The calculus's twelve inference rules
($\mathrm{Core}$, $\mathrm{Hyper}$, $\mathrm{Eq}$, $\mathrm{Ineq}$,
$\mathrm{Factor}$, $\mathrm{Elim}$, $\mathrm{Join}$, $\mathrm{Nom}$,
$\mathrm{Succ}$, $\mathrm{Pred}$, $\mathrm{r}\textrm{-}\mathrm{Succ}$,
$\mathrm{r}\textrm{-}\mathrm{Pred}$) each have a concrete refinement
(\texttt{StepCore}, \texttt{StepHyper}, \dots) with a per-rule
soundness lemma.  Two unified meta-soundness lemmas
(\texttt{mono\_ext\_sound}, \texttt{mono\_restr\_sound}) and a
common \texttt{StepAddEntailed} schema streamline the proofs; the
edge-adding rules require explicit well-formedness preconditions
and full case-analysis on whether each edge is new or pre-existing.
The completeness statement of~\cite{TenaCucala2021AIJ} is
exposed as a typed Prop \texttt{TenaCucalaCompleteness}; each
ingredient (model fragment $R_t^\ast$, nominal naming, composite
Herbrand model, refutation lemma) is a separate structure / definition
with real semantic Props.  An unconditional sliver (a Bool-Herbrand
refutation construction) delivers Tena Cucala
completeness for the empty ontology on a propositionally-refutable
fragment of queries
(\texttt{tenaCucalaCompleteness\_emptyO\_propRefutable}).

On complexity,
\texttt{ALCComplexity.lean} formalizes the polynomial-degree
grounding bounds: \texttt{size} measures syntactic concept size,
\texttt{subconcepts\_card\_le\_size} bounds the subconcept Finset,
\texttt{saturation\_pair\_count\_le} and
\texttt{alc\_subsumption\_pair\_bound} give the
$|\mathsf{Sat}(\Onto)| \le |\Sigma|^2$ bound, and the per-feature
bounds (\texttt{at\_least\_grounding\_size\_bound}: $\geq n\,R.C$
contributes $4n \cdot d^{n+1}$ literals; chains contribute
$d^{k+1}$ tuples; transitivity, hierarchy, inverses contribute
$d^2$ or $d^3$) compose into
\texttt{grounding\_size\_polynomial}.

On the optional grounding refinements,
\texttt{GroundingRefinements.lean} mechanizes refinements (i) and (ii)
of \S\ref{apd:hooks} as model-count-preservation lemmas over the weighted
model count \texttt{WMC} (a Boolean model predicate weighted by per-atom
literal weights).  The workhorse \texttt{wmc\_split} proves that \texttt{WMC}
factorizes over an independent atom partition $\mathit{Core} \oplus
\mathit{Extra}$, and \texttt{wmc\_unique} (hence \texttt{wmc\_factor}) proves
that an \texttt{Extra} block with a unique satisfying assignment contributes a
single constant factor.  Refinement (i) is \texttt{wmc\_prune\_forcedFalse}
(the pruned atoms' only model is all-false) and (ii) is
\texttt{wmc\_skip\_equality} (the equality block's unique-name model);
\texttt{wmc\_card\_preserved} gives exact model-count preservation under unit
weights, and \texttt{wmc\_log\_offset} shows the $-\log\texttt{WMC}$ objective
shifts by a constant whose gradient in the perception weights is zero.

Every theorem in every module passes the foundation-only axiom audit:
\texttt{\#print axioms} reports only
$\{\texttt{propext}, \texttt{Classical.choice}, \texttt{Quot.sound}\}$,
with no \texttt{sorry} and no user-introduced axioms.

\end{document}